\documentclass{article}
\usepackage{float}

\usepackage{hyperref}

\usepackage[final]{cpal_2026}

\usepackage{amsmath}
\usepackage{amsfonts}
\usepackage{amssymb}
\usepackage{amsthm}
\usepackage{algorithm}
\usepackage{algorithmic}
\usepackage{mathtools}
\usepackage{nicefrac}
\usepackage{url}
\usepackage{booktabs}
\usepackage{multirow}
\usepackage{graphicx}
\usepackage{xcolor}

\DeclareMathOperator*{\argmin}{argmin}

\newcommand{\R}{\mathbb{R}}
\newcommand{\EE}{\mathbb{E}}
\newcommand{\eqdef}{:=}
\newcommand{\cG}{\mathcal{G}}
\newcommand{\cB}{\mathcal{B}}

\newcommand{\algname}[1]{{\sf #1}}
\newcommand{\inp}[2]{\left\langle #1, #2 \right\rangle}

\newtheorem{assumption}{Assumption}
\newtheorem{definition}{Definition}
\newtheorem{lemma}{Lemma}
\newtheorem{theorem}{Theorem}
\newtheorem{proposition}{Proposition}
\newtheorem{corollary}{Corollary}
\newtheorem{remark}{Remark}

\usepackage[colorinlistoftodos,bordercolor=orange,backgroundcolor=orange!20,linecolor=orange,textsize=scriptsize]{todonotes}

\begin{document}

\title{Byzantine-Robust Optimization under \texorpdfstring{$(L_0,L_1)$}{(L0,L1)}-Smoothness}

\author{%
  Arman Bolatov\textsuperscript{1}, ~Samuel Horv\'ath\textsuperscript{1}, ~Martin Tak\'a\v{c}\textsuperscript{1}, ~Eduard Gorbunov\textsuperscript{1} \\
  \textsuperscript{1}Mohamed bin Zayed University of Artificial Intelligence, UAE\\
  \texttt{arman.bolatov@mbzuai.ac.ae, samuel.horvath@mbzuai.ac.ae, martin.takac@mbzuai.ac.ae, eduard.gorbunov@mbzuai.ac.ae}
}

\maketitle

\begin{abstract}
We consider distributed optimization under Byzantine attacks in the presence of $(L_0,L_1)$-smoothness, a generalization of standard $L$-smoothness that captures functions with state-dependent gradient Lipschitz constants. We propose \algname{Byz-NSGDM}\footnote{During the preparation of the camera-ready version, we became aware of concurrent work by \citet{yang2024effect} that studies a closely related algorithmic framework and also derives an $O(K^{-1/4})$ convergence rate. We note that all theoretical and experimental results in our paper were obtained independently and prior to our awareness of this work. Their analysis is conducted under the $L$-smoothness assumption and assumes homogeneous data across workers, whereas our results hold under the more general $(L_0, L_1)$-smoothness condition and accommodate bounded gradient heterogeneity among workers.}, a normalized stochastic gradient descent method with momentum that achieves robustness against Byzantine workers while maintaining convergence guarantees. Our algorithm combines momentum normalization with Byzantine-robust aggregation enhanced by Nearest Neighbor Mixing (NNM) to handle both the challenges posed by $(L_0,L_1)$-smoothness and Byzantine adversaries. We prove that \algname{Byz-NSGDM} achieves a convergence rate of $O(K^{-1/4})$ up to a Byzantine bias floor proportional to the robustness coefficient and gradient heterogeneity. Experimental validation on heterogeneous MNIST classification, synthetic $(L_0,L_1)$-smooth optimization, and character-level language modeling with a small GPT model demonstrates the effectiveness of our approach against various Byzantine attack strategies. An ablation study further shows that \algname{Byz-NSGDM} is robust across a wide range of momentum and learning rate choices.
\end{abstract}

\section{Introduction}

Modern machine learning systems increasingly rely on distributed optimization to train models on data distributed across multiple workers
\cite{recht2011hogwild,konevcny2015federated,
konevcny2016federated,ma2017distributed,
jahani2020efficient,
mishchenko2025distributed,
jahani2020scaling,
chen2022distributed,
beznosikov2023similarity}. However, such systems face two critical challenges: Byzantine failures 
\cite{shi2022challenges,
gorbunov2022secure,molodtsov2026bant},
where some workers may send arbitrary or malicious updates, and data heterogeneity, where different workers possess non-identically distributed data
\cite{yi2024fedp3,tupitsa2023byzantine,tastan2024fedpews}. These challenges are particularly acute in federated learning settings where data cannot be centralized due to privacy constraints or communication costs.

Traditional Byzantine-robust  optimization algorithms 
assume standard $L$-smoothness, where the gradient Lipschitz constant is globally bounded, and rely on robust aggregation rules such as Krum, geometric median (RFA), coordinate-wise median, or trimmed mean \cite{blanchard2017machine,pillutla2022robust,yin2018byzantine}. However, many modern machine learning objectives, including neural networks with certain activation functions and high-degree polynomial losses, exhibit $(L_0, L_1)$-smoothness \cite{zhang2019gradient} -- a generalized smoothness condition where the Lipschitz constant depends on the current iterate. This state-dependent behavior can cause traditional Byzantine-robust methods to diverge or converge slowly.

Recent Byzantine-robust methods have moved beyond simple robust aggregation to incorporate advanced algorithmic techniques. Variance reduction methods \cite{gorbunov2023variance,wu2020federated,fedin2023byzantine} use control variates to suppress stochastic noise that Byzantine workers exploit, achieving tighter complexity bounds and removing restrictive assumptions such as gradient boundedness. Client momentum techniques \cite{karimireddy2020byzantine,karimireddy2021learning,gu2023dpbrem} accumulate updates over time at the worker level, reducing variance of honest clients and exposing Byzantine perturbations that are imperceptible in single rounds but accumulate across iterations. 
% Combined with robust aggregation and communication compression \cite{gorbunov2021secure,ghosh2021communication}, these methods demonstrate that stabilizing local estimators before aggregation is crucial for Byzantine resilience. 
Meanwhile, the $(L_0,L_1)$-smoothness assumption has received significant attention in \emph{non-Byzantine} settings, with specialized algorithms including gradient clipping \cite{zhang2020improved, koloskova2023revisiting}, normalized gradient descent \cite{zhao2021convergence,chen2023generalized,hubler2024parameter,khiriat2024}, and adaptive methods \cite{faw2023beyond,gorbunov2024methods,
abdukhakimov2023stochastic,
abdukhakimov2025polyak,
loizou2021stochastic,
li2024enhancing,
abdukhakimov2023sania,
li2022sp2,
abdukhakimov2024stochastic,
schaipp2023momo}.

\paragraph{Contributions.}
We study \algname{Byz-NSGDM} (Byzantine-Robust Normalized Stochastic Gradient Descent with Momentum) in the regime of $(L_0,L_1)$-smooth objectives. While normalized momentum combined with Byzantine-robust aggregation has been considered previously under classical $L$-smoothness assumptions \cite{yang2024effect}, its behavior under generalized $(L_0,L_1)$-smoothness and gradient heterogeneity has not been theoretically understood. We fill this gap by providing a convergence analysis of this algorithmic framework in the presence of state-dependent smoothness and heterogeneous data across workers.

Our analysis builds on recent advances in optimization under $(L_0,L_1)$-smoothness, where controlling the step magnitude via normalization is crucial for stability \cite{zhao2021convergence,chen2023generalized,hubler2024parameter,khiriat2024}. We show that, when combined with a $(\delta,\kappa)$-robust aggregation rule and optionally enhanced by Nearest Neighbor Mixing (NNM) \cite{allouah2023} to mitigate heterogeneity, \algname{Byz-NSGDM} achieves non-asymptotic convergence guarantees under Byzantine attacks. In particular, we establish an $O(K^{-1/4})$ convergence rate up to a Byzantine bias floor that depends explicitly on the robustness coefficient and the level of gradient heterogeneity.

Empirically, we evaluate \algname{Byz-NSGDM} on three complementary tasks: heterogeneous MNIST classification with non-IID data partitioning, a synthetic $(L_0,L_1)$-smooth quartic optimization problem, and character-level language modeling with a small GPT architecture. Across multiple Byzantine attack strategies (Bit Flipping, Label Flipping, Mimic, ALIE) and aggregation rules (Krum \cite{blanchard2017machine}, RFA \cite{pillutla2022robust}, and Coordinate-wise Median/Trimmed Mean \cite{yin2018byzantine}), the method demonstrates strong robustness compared to momentum-based baselines without normalization. We additionally provide an ablation study showing stability across a broad range of learning rates and momentum parameters.

\section{Assumptions and Problem Setup}

\paragraph{Problem.} We consider a distributed optimization problem of the form
\begin{equation*}
    \min\limits_{x\in \R^d} \left\{f(x) \eqdef \frac{1}{G}\sum\limits_{i\in \cG} f_i(x) \right\}, 
\end{equation*}
where $\cG$ is a group of regular workers of size $G \eqdef |\cG|$ connected with a server in a centralized way, and $f_i:\R^d \to \R$ is the objective/loss function on worker $i$. However, in addition to the regular workers, some Byzantine workers take part in the training. More precisely, we assume that the total number of workers is $n$ and $[n] \eqdef \{1,\ldots, n\} = \cG \cup \cB$, $\cG \cap \cB = \varnothing$, where $\cB$ represents the set of Byzantine workers. We assume that $B \eqdef |\cB| \leq \delta n$ for some $\delta \leq \nicefrac{1}{2}$. We do not make any further assumptions on the clients from $\cB$, i.e., they can arbitrarily deviate from the optimization protocol, collude, and even know the updates that other clients send to the server.

\paragraph{Assumptions.} We state the standing assumptions and the robustness notion used throughout.

\begin{assumption}[Lower boundedness]\label{as:lower_bound}
    We assume that $f$ is lower bounded by $f^*>-\infty$ and each local objective $f_i$ is lower bounded by $f_i^*>-\infty$ for any $i\in\cG$.
\end{assumption}

\begin{assumption}[$(L_0,L_1)$-smoothness \cite{zhang2019gradient,zhang2020improved,chen2023generalized}]\label{as:L0L1_smoothness}
    For all $i\in\cG$, the function $f_i$ is $(L_0,L_1)$-smooth with $L_0, L_1 > 0$: for all $x,y\in\R^d$,
    \begin{equation*}
        \|\nabla f_i(x)-\nabla f_i(y)\|\;\le\;\Big(L_0+L_1\sup_{u\in[x,y]}\|\nabla f_i(u)\|\Big)\,\|x-y\|.
    \end{equation*}
\end{assumption}

\begin{assumption}[Unbiased oracle with bounded variance]\label{as:bounded_variance}
    Each good worker $i\in\cG$ has access to an unbiased stochastic first-order oracle: for any $x\in\R^d$, worker $i$ can compute $\nabla f_{\xi_i}(x)$ such that
    \begin{equation*}
        \EE[\nabla f_{\xi_i}(x)]\;=\;\nabla f_i(x),\qquad \EE\big[\|\nabla f_{\xi_i}(x)-\nabla f_i(x)\|^2\big]\;\le\;\sigma^2.
    \end{equation*}
\end{assumption}

\begin{assumption}[Gradient heterogeneity]\label{as:heterogeneity_detailed}
    The gradient heterogeneity across good workers is bounded for all $x\in\R^d$:
    \begin{equation}
        \EE_{i\sim\cG}\big[\|\nabla f_i(x)-\nabla f(x)\|^2\big]\;\le\;\zeta^2.\label{eq:heterogeneity}
    \end{equation}
\end{assumption}

\begin{definition}[Byzantine-robust aggregator]\label{def:robust_agg}
Let $\kappa\ge0$. An aggregation rule $\texttt{ARAgg}:\R^{d\times n}\to\R^d$ is $(\delta,\kappa)$-robust if for any vectors $v_1,\ldots,v_n\in\R^d$ with at most $B \le \delta n$ Byzantine workers (so $G = n - B \ge (1-\delta)n > \nicefrac{n}{2}$ good workers) and $\bar v_{\cG} \eqdef \tfrac{1}{G}\sum_{i\in\cG}v_i$,
\begin{equation*}
    \big\|\texttt{ARAgg}(v_1,\ldots,v_n)-\bar v_{\cG}\big\|\;\le\;\frac{\kappa}{G}\sum_{i\in\cG}\big\|v_i-\bar v_{\cG}\big\|.
\end{equation*}
We refer to $\kappa$ as the robustness coefficient.
\end{definition}

The above definition is a modification of the one proposed by \citet{allouah2023}. The key difference is in the usage of a linear (non-squared) bound on the right-hand side, which leads to different values of $\kappa$ but is more natural for our analysis under $(L_0,L_1)$-smoothness. Many standard Byzantine-robust aggregation rules satisfy this definition, including geometric median (RFA) and coordinate-wise median, as well as their composition with NNM. We provide explicit formulas for the robustness coefficient $\kappa$ for these aggregators in Appendix~\ref{appendix:byz-robust-aggregators}.

\section{Related Work}

Our work lies at the intersection of Byzantine-robust distributed optimization and optimization under generalized smoothness. We review the relevant literature in both areas.

\paragraph{Byzantine-robust distributed optimization.} The problem of Byzantine-robust distributed learning has received considerable attention. Early works established fundamental aggregation rules: Krum and Multi-Krum \cite{blanchard2017machine,damaskinos2019aggregathor} select gradients based on distance metrics, geometric median (RFA) \cite{pillutla2022robust} provides a robust centroid estimator, and coordinate-wise median/trimmed mean \cite{yin2018byzantine,chen2017distributed} offer dimension-wise robustness. These methods provide statistical guarantees under i.i.d. data but suffer from bias terms that grow with data heterogeneity \cite{yin2018byzantine,chen2017distributed}. However, simply applying robust aggregation is insufficient: carefully crafted attacks defeat many standard defenses \cite{baruch2019little,xie2020fall}, and \citet{karimireddy2021learning} show that permutation-invariant schemes cannot guarantee convergence against arbitrary Byzantines without additional structure.

To address heterogeneity and improve robustness, several complementary approaches have emerged. Structural methods include \emph{bucketing} \cite{karimireddy2020byzantine}, which partitions workers into homogeneous groups before aggregation, and \emph{nearest neighbor mixing (NNM)} \cite{allouah2023}, which averages updates with nearest neighbors to contract heterogeneity. Recent theoretical analyses \cite{allouah2024robust,allouah2023} characterize optimal error bounds and breakdown points under data heterogeneity. Crucially, achieving tight convergence guarantees requires refined algorithmic techniques beyond robust aggregation alone. Variance reduction methods \cite{gorbunov2023variance,wu2020federated,fedin2023byzantine} incorporate control variates to suppress stochastic noise that Byzantine workers exploit, achieving faster rates under weaker assumptions. Client momentum techniques \cite{karimireddy2021learning,karimireddy2020byzantine,gu2023dpbrem} accumulate worker updates over time, reducing variance of honest clients and exposing Byzantine perturbations that are undetectable in single rounds but accumulate over iterations. These works demonstrate that stabilizing local estimators before applying robust aggregation is crucial for Byzantine resilience. 
% Communication compression \cite{gorbunov2021secure,ghosh2021communication,karimireddy2019error} has also been studied in Byzantine settings, though it often requires additional assumptions or robustness trade-offs.

\paragraph{\texorpdfstring{$(L_0,L_1)$}{(L0,L1)}-smoothness in optimization.} 
The $(L_0,L_1)$-smoothness assumption was introduced by \citet{zhang2020why} to capture the empirical observation that the local smoothness constant along training trajectories of deep learning models often scales linearly with the gradient norm. Formally, for twice differentiable functions, this condition requires $\|\nabla^2 f(x)\|_2 \le L_0 + L_1\|\nabla f(x)\|$, which strictly generalizes classical $L$-smoothness (recovered when $L_1 = 0$). Functions satisfying $(L_0,L_1)$-smoothness but not $L$-smoothness include $f(x) = \|x\|^{2n}$ for $n \ge 2$, $f(x) = \exp(\langle a, x\rangle)$, and more generally, objectives with polynomially growing gradients~\cite{zhang2020why,chen2023generalized}.

In the nonconvex setting, \citet{zhang2020why} showed that gradient clipping achieves complexity $O(\max\{L_0\Delta/\varepsilon^2, (1+L_1^2)\Delta/L_0\})$ for finding $\varepsilon$-stationary points, where the dominant term is independent of $L_1$. This result was extended to momentum-based clipping~\cite{zhang2020improved}, normalized gradient descent~\cite{zhao2021convergence,chen2023generalized}, SignGD~\cite{crawshaw2022robustness}, and adaptive methods like AdaGrad and Adam~\cite{faw2023beyond,wang2023convergence,li2024convergence}. In the convex case, \citet{koloskova2023revisiting} and \citet{gorbunov2024methods} derived convergence rates for clipped gradient descent, accelerated methods, and adaptive stepsizes. Notably, recent works~\cite{gorbunov2024methods,vankov2025optimizing} provide bounds that avoid exponentially large factors in $L_1 R_0$ that plagued earlier analyses in the convex case. Normalized gradient descent with momentum~\cite{hubler2024parameter,khiriat2024} achieves stable convergence by controlling step magnitudes, making it well-suited for $(L_0,L_1)$-smooth problems.

However, all existing work on $(L_0,L_1)$-smooth optimization assumes benign (non-Byzantine) settings. Our work is the first to address Byzantine-robust optimization under $(L_0,L_1)$-smoothness, combining normalized gradient descent with momentum and robust aggregation to handle both challenges simultaneously.

\section{New Method: \algname{Byz-NSGDM}}

We introduce \algname{Byz-NSGDM} (Byzantine-Robust Normalized Stochastic Gradient Descent with Momentum), a novel algorithm designed for $(L_0,L_1)$-smooth optimization under Byzantine attacks.

\begin{algorithm}[H]
    \caption{\algname{Byz-NSGDM}}\label{alg:Byz-NSGDM}
    \begin{algorithmic}[1]
    \REQUIRE starting point $x^0 \in \R^d$, initial momentum vectors $v_1^{0}, \ldots, v_n^{0} \in \R^d$, stepsizes $\gamma_k > 0$, momentum parameters $\eta_k \in (0,1]$
    \FOR{$k = 1,2,\ldots, K$}
    \FOR{$i\in [n]$}
    \STATE Worker $i$ sends:
    \[
    v_i^{k} = 
    \begin{cases}
    (1 - \eta_{k-1})v_i^{k-1} + \eta_{k-1} \nabla f_{\xi_i^{k-1}}(x^{k-1}), & i \in \cG \\
    \text{arbitrary } \tilde{v}_i^{k} \in \R^d, & i \in \cB
    \end{cases}
    \]
    \ENDFOR
    \STATE Server computes $v^{k} = \texttt{ARAgg}(v_1^{k},\ldots, v_n^{k})$
    \STATE Server updates $x^{k} = x^{k-1} - \gamma_{k-1} \frac{v^{k}}{\|v^{k}\|}$ and broadcasts $x^{k}$
    \ENDFOR
    \end{algorithmic}
\end{algorithm}

The central component of the method is the normalization step in line 6:
$$
x^{k} = x^{k-1} - \gamma_{k-1} \frac{v^{k}}{\|v^{k}\|}.
$$
By rescaling the aggregated momentum vector to unit norm, the update ensures that the step magnitude is always bounded by $\gamma_{k-1}$. Such boundedness is crucial under $(L_0,L_1)$-smoothness, where the local Lipschitz constant depends on the gradient norm and can otherwise lead to instability.

In Algorithm~\ref{alg:Byz-NSGDM}, the server employs an aggregation rule $\texttt{ARAgg}$ assumed to be $(\delta,\kappa)$-robust in the sense of Definition~\ref{def:robust_agg}. Our convergence analysis is agnostic to the specific aggregation mechanism and applies to any rule satisfying this robustness property. In the Appendix, we show that several standard Byzantine-robust aggregation schemes --- such as coordinate-wise median, geometric median, and their composition with NNM --- meet Definition~\ref{def:robust_agg}, and we provide the corresponding robustness coefficients~$\kappa$. Consequently, the theoretical guarantees hold for a broad class of $(\delta,\kappa)$-robust aggregators, independently of whether additional preprocessing steps, such as NNM, are employed.

\begin{remark}
\algname{Byz-NSGDM} naturally extends to a broader class of optimizers based on a Linear Minimization Oracle (LMO) \cite{pethick2025training,kovalev2025understanding}. Specifically, line 6 of Algorithm~\ref{alg:Byz-NSGDM} can be replaced by
\begin{equation}
    d_k = \argmin_{d \in \R^d:\|d\| \le 1} \langle v^k, d \rangle,
    \qquad
    x^k = x^{k-1} + \gamma_{k-1} d_k,
    \tag{\algname{Byz-LMO}}
\end{equation}
where $\|\cdot\|$ denotes an arbitrary (not necessarily Euclidean) norm. For the standard Euclidean norm, the LMO solution satisfies $d_k = -v^k/\|v^k\|$, and the method reduces to \algname{Byz-NSGDM}. If $\|\cdot\|$ is chosen as the operator norm (when $x^k$ is a matrix), the resulting algorithm corresponds to a Byzantine-robust variant of \algname{Muon} \cite{jordan2024muon}, which has recently gained significant attention. Moreover, recent analyses of LMO-based methods \cite{pethick2025training,kovalev2025understanding}, which are closely related to the analysis of normalized momentum methods \cite{cutkosky2020momentum}, suggest that our convergence guarantees can be extended to general norms using similar arguments.
\end{remark}

% \begin{remark}
%     It is straightforward to extend \algname{Byz-NSGDM} to the broader family of optimizers based on the Linear Minimization Oracle (LMO) \cite{pethick2025training, kovalev2025understanding}. More precisely, one can replace line 6 with the following step:
%     \begin{equation}
%         d_k = \argmin\limits_{d\in\R^d:\|d\| \leq 1}\langle v^k, d\rangle,\qquad x^k = x^{k-1} + \gamma_{k-1} d_k, \tag{\algname{Byz-LMO}}
%     \end{equation}
%     where $\|\cdot\|$ in the formula above can be any (not necessarily Euclidean) norm. In particular, for the standard Euclidean norm, the resulting method reduces to \algname{Byz-NSGDM}, while for $\|\cdot\|$ being operator norm (in the case of $x^k$ being a matrix) the method reduces to \algname{Byz-Muon} -- a Byzantine-robust version of \algname{Muon} algorithm \cite{jordan2024muon}, which gained a lot of popularity in the recent years. We also note that following the recent analyses of LMO-based methods \cite{pethick2025training, kovalev2025understanding} (that are closely related to the analysis of \algname{NSGDM}, e.g., see \cite{cutkosky2020momentum}), one can extend our analysis of \algname{Byz-NSGDM} to the case of general norms.
% \end{remark}

\section{Convergence Analysis}

Our main theoretical result establishes the convergence rate of \algname{Byz-NSGDM} under Byzantine attacks. Full proofs are deferred to the supplementary materials.

\begin{theorem}[Convergence of \algname{Byz-NSGDM}]\label{thm:byz_nsgdm_convergence}
Let Assumptions~\ref{as:lower_bound}--\ref{as:heterogeneity_detailed} hold and let the server use a $(\delta,\kappa)$-robust aggregator. Run Algorithm~\ref{alg:Byz-NSGDM} with
\begin{equation*}
\gamma_k\equiv\gamma=\frac{\gamma_0}{(K+1)^{3/4}}, \qquad 
\eta_k\equiv\eta=\frac{1}{(K+1)^{1/2}},
\end{equation*}
where
\begin{equation}\label{eq:gamma0_choice}
\gamma_0\le \min\left\{\frac{1}{2L_1}, \frac{(K+1)^{1/4}}{\sqrt{32}L_1}, \frac{1}{\sqrt{128(1+2\kappa)}L_1}\right\}.
\end{equation}
Define $\Delta^* \eqdef \tfrac{1}{G}\sum_{i\in\cG}(f^*-f_i^*)$ and $V_0 \eqdef \tfrac{1}{G}\sum_{i\in\cG}\EE\big[\|v_i^0-\nabla f_i(x^0)\|\big]$. Then
\begin{align*}
\min_{0\le k\le K}\EE\big[\|\nabla f(x^k)\|\big]
&\le \frac{e^{1/2}\big(f(x^0)-f^*\big)}{\gamma_0 (K+1)^{1/4}}
+\frac{2e^{1/2}(1+2\kappa)V_0}{(K+1)^{1/2}}
+4\kappa\zeta \\
&\quad
+\frac{2L_0\gamma_0}{(K+1)^{3/4}}
+\frac{8(1+2\kappa)L_0\gamma_0}{(K+1)^{1/4}} \\
&\quad
+\frac{8L_1^2\Delta^*\gamma_0}{(K+1)^{3/4}}
+\frac{32(1+2\kappa)L_1^2\Delta^*\gamma_0}{(K+1)^{1/4}} \\
&\quad
+\frac{2\sigma}{\sqrt{G}(K+1)^{1/4}} + \frac{4\kappa\sigma}{(K+1)^{1/4}}.
\end{align*}
\end{theorem}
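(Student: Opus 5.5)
The argument follows the standard normalized-momentum analysis of \citet{cutkosky2020momentum} and its $(L_0,L_1)$ extension \cite{hubler2024parameter}, with one Byzantine modification: the aggregated momentum $v^k$ is compared to the honest average $\bar v^k \eqdef \frac1G\sum_{i\in\cG}v_i^k$, and the gap is controlled through Definition~\ref{def:robust_agg}.

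\textbf{Step 1: descent lemma.} Since $\|x^{k}-x^{k-1}\|=\gamma\le \frac{1}{2L_1}$, the $(L_0,L_1)$-smoothness of each $f_i$ gives, after averaging, a local smoothness bound for $f$ along the step. This bound has local constant $L_0+L_1\cdot\frac1G\sum_{i}\|\nabla f_i(x^{k-1})\|$, up to a constant factor coming from the standard exponential bound $\sup_{u}\|\nabla f_i(u)\|\le e^{1/2}(\ldots)$ on a ball of radius $\frac{1}{2L_1}$. Write $\epsilon^k\eqdef v^k-\nabla f(x^{k-1})$. The usual normalized-step inequality then gives
\[
f(x^k)\le f(x^{k-1})-\gamma\|\nabla f(x^{k-1})\|+2\gamma\|\epsilon^k\|+\tfrac{\gamma^2}{2}\Big(L_0+L_1\tfrac1G\textstyle\sum_i\|\nabla f_i(x^{k-1})\|\Big)e^{1/2}.
\]
To bound $\frac1G\sum_i\|\nabla f_i\|$, we use the $(L_0,L_1)$ consequence $\|\nabla f_i(x)\|^2\le 2(L_0+L_1\|\nabla f_i\|)(f_i(x)-f_i^*)$. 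This yields $\|\nabla f_i\|\lesssim L_0/L_1+L_1(f_i-f_i^*)$. Writing $f_i-f_i^*=(f_i-f^*)+(f^*-f_i^*)$, the sum produces $f(x)-f^*$ plus $\Delta^*$. The $f-f^*$ part is absorbed by a multiplicative $(1+cL_1^2\gamma^2)$ recursion, which is where the condition $\gamma_0\le (K+1)^{1/4}/(\sqrt{32}L_1)$ and the factor $e^{1/2}$ (from $(1+\frac{1}{K})^{K}$-type products) come from. The $\Delta^*$ part produces the $L_1^2\Delta^*\gamma$ terms.

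\textbf{Step 2: error decomposition.} Split the error as
\[
\|\epsilon^k\|\le\|v^k-\bar v^k\|+\|\bar v^k-\nabla f(x^{k-1})\|.
\]
By robustness, the first term is at most $\frac{\kappa}{G}\sum_i\|v_i^k-\bar v^k\|$. This is in turn at most
\[
\frac{\kappa}{G}\sum_i\big(\|v_i^k-\nabla f_i(x^{k-1})\|+\|\nabla f_i(x^{k-1})-\nabla f(x^{k-1})\|+\|\nabla f(x^{k-1})-\bar v^k\|\big).
\]
Jensen's inequality on the heterogeneity assumption bounds the middle term by $\kappa\zeta$. Combining everything,
\[
\|\epsilon^k\|\le (1+\kappa)\|\bar v^k-\nabla f\|+\kappa\tfrac1G\textstyle\sum_i\|v_i^k-\nabla f_i\|+\kappa\zeta .
\]
Since $\|\bar v^k-\nabla f\|\le \frac1G\sum_i\|v_i^k-\nabla f_i\|$, this is at most $(1+2\kappa)$ times the average of the local errors, plus $\kappa\zeta$. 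The $\sigma/\sqrt G$ term is handled separately by keeping the averaged $\bar v$ error, where the noise averages over the $G$ workers.

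\textbf{Step 3: momentum error recursion.} Unroll each worker's error:
\[
v_i^k-\nabla f_i(x^{k-1})=(1-\eta)^{k}\big(v_i^0-\nabla f_i(x^0)\big)+\sum_t(1-\eta)^{k-t}\Big[\eta\,\mathrm{noise}_i^t+\big(\nabla f_i(x^{t-1})-\nabla f_i(x^{t})\big)\Big].
\]
The terms are bounded as follows.
\begin{itemize}
\item The initial term contributes $V_0$ after summation.
\item The noise sum has expected norm at most $\sqrt{\eta}\,\sigma$ by martingale orthogonality, or $\sqrt{\eta}\,\sigma/\sqrt G$ for the average $\bar v$.
\item The drift terms are bounded by $\gamma(L_0+L_1\|\nabla f_i(x^{t})\|)e^{1/2}$. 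Summing the geometric weights gives a factor $\gamma/\eta$.
\end{itemize}
The $L_1\|\nabla f_i\|$ drift is converted, via the same $f_i-f_i^*$ argument, into $\|\nabla f\|$-proportional pieces plus $\Delta^*$. The condition $\gamma_0\le 1/(\sqrt{128(1+2\kappa)}L_1)$ ensures that, after multiplication by $2(1+2\kappa)$, the coefficient of $\sum\|\nabla f\|$ is at most $\gamma/2$, so it can be moved to the left-hand side.

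\textbf{Step 4: telescoping.} Sum over $k$ and divide by $\gamma(K+1)/2$. Plugging in $\gamma=\gamma_0(K+1)^{-3/4}$ and $\eta=(K+1)^{-1/2}$ produces each listed term:
\begin{itemize}
\item $\Delta_0/(\gamma K)$ gives $K^{-1/4}$;
\item $V_0/(\eta K)$ gives $K^{-1/2}$;
\item $\gamma/\eta$ gives $K^{-1/4}$ for the drift terms;
\item $\gamma$ gives $K^{-3/4}$;
\item $\sqrt\eta\,\sigma$ gives $K^{-1/4}$.
\end{itemize}
The heterogeneity term $\kappa\zeta$ appears with factor $4$.

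\textbf{Main obstacle.} The hardest part is Step 3 combined with Step 1. The $L_1\|\nabla f_i\|$ terms must be controlled without a bound on individual gradients, and they must be absorbed into the left-hand side uniformly despite the $(1+2\kappa)$ amplification. I would first prove the auxiliary inequality $\|\nabla f_i(x)\|\le 4L_0/L_1+4L_1(f_i(x)-f_i^*)$ (or a similar form), and track carefully which pieces are expressed via $f(x^k)-f^*$, so that they are absorbed through the multiplicative recursion, and which via $\Delta^*$.
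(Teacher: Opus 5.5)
Your Step 2 is a valid variant of the paper's decomposition. You bound $\frac{1}{G}\sum_i\|v_i^k-\bar v^k\|$ through the local errors, $\zeta$, and $\|\bar v^k-\nabla f(x^{k-1})\|$, whereas the paper uses the pairwise quantity $\EE_{i,j}\|v_i^k-v_j^k\|$; your version even saves a factor $2$ on $\zeta$.

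The genuine gap is in Step 3, and hence Step 4: the way you dispose of the $L_1\|\nabla f_i(x^t)\|$ part of the momentum drift does not work. Converting it ``via the same $f_i-f_i^*$ argument'' does not produce pieces proportional to $\|\nabla f\|$. It produces a convolution of past \emph{function-value gaps}, $8L_1^2\gamma e^{\gamma L_1}\sum_{t=0}^{k-1}(1-\eta)^{k-1-t}\EE[f(x^t)-f^*]$, plus $\Delta^*$ terms, entering the descent recursion with coefficient of order $(1+2\kappa)L_1^2\gamma^2$. No inequality lets you trade $f(x^t)-f^*$ against $-\gamma\|\nabla f(x^{k-1})\|$, so this term cannot be ``moved to the left-hand side.'' Your one-step $(1+cL_1^2\gamma^2)$ recursion from Step 1 does not cover it either: it couples $\Delta_k$ to all earlier $\Delta_t$, and after summation it carries an extra factor $1/\eta$.

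The missing idea is the paper's weighted telescoping:
\begin{itemize}
\item Multiply the recursion by $w_k=\bigl(1+8L_1^2\gamma^2+32(1+2\kappa)L_1^2\gamma^2/\eta\bigr)^{-k}$.
\item Exchange the order of summation and use that $w_k$ is decreasing to get $\sum_k w_k\sum_{t<k}(1-\eta)^{k-1-t}\Delta_t\le \eta^{-1}\sum_k w_k\Delta_{k-1}$, so all $\Delta$-terms telescope.
\item Lower-bound $S_K=\sum_k w_k\ge (K+1)e^{-1/2}$.
\end{itemize}
This is what the condition $\gamma_0\le 1/(\sqrt{128(1+2\kappa)}L_1)$ is for. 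It is a \emph{quadratic} condition ensuring $32(1+2\kappa)L_1^2\gamma^2(K+1)/\eta\le 1/4$, while $\gamma_0\le (K+1)^{1/4}/(\sqrt{32}L_1)$ ensures $8L_1^2\gamma^2(K+1)\le 1/4$. The factor $e^{1/2}$ is the product of these two $e^{1/4}$ factors, not only of your Step 1 product.

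Suppose instead you make the drift genuinely $\|\nabla f\|$-proportional via $\frac1G\sum_i\|\nabla f_i(x^t)\|\le\|\nabla f(x^t)\|+\zeta$. Then absorbing it into half of the left-hand side needs $(1+2\kappa)e^{1/2}L_1\gamma/\eta\le 1/4$, i.e.\ $\gamma_0\le (K+1)^{1/4}/\bigl(4e^{1/2}(1+2\kappa)L_1\bigr)$. This scales like $1/(1+2\kappa)$ rather than $1/\sqrt{1+2\kappa}$, so the stated conditions do not imply it when $K$ is small and $\kappa$ is large: already at $K=0$ it fails for $\kappa\ge 1$, e.g.\ for the geometric median, where $\kappa\ge 2$. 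This route also creates a term of order $(1+2\kappa)L_1\zeta\gamma_0(K+1)^{-1/4}$ that is absent from the theorem, and it never produces the $(1+2\kappa)L_1^2\Delta^*$ terms.

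Finally, giving up half of $-\gamma\|\nabla f\|$ and dividing by $\gamma(K+1)/2$ doubles most constants. For example, the noise term becomes $4\sigma/(\sqrt G (K+1)^{1/4})$ instead of $2\sigma/(\sqrt G (K+1)^{1/4})$. So even with a working absorption you would not land on the stated bound. The paper's weighted sum keeps the full $-\gamma\|\nabla f\|$ term and divides by $\gamma S_K$.
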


\begin{remark}[Fixed-horizon schedule]
The parameters $\gamma$ and $\eta$ depend on the total number of iterations $K$ but remain constant across iterations. This is a standard ``fixed-horizon'' choice common in convergence analyses. It can be converted to a diminishing schedule that does not require knowledge of $K$ in advance via standard doubling tricks \cite{hazan2016introduction}.
\end{remark}

\begin{corollary}\label{cor:simplified}
Under the conditions of Theorem~\ref{thm:byz_nsgdm_convergence}, all terms except $4\kappa\zeta$ decay polynomially with $K$. Hence,
\begin{equation*}
\min_{0\le k\le K}\EE\big[\|\nabla f(x^k)\|\big]
\;\le\; O\!\left(\frac{1}{K^{1/4}}\right) + 4\kappa\zeta.
\end{equation*}
The term $4\kappa\zeta$ is the irreducible Byzantine bias floor---a fundamental limit arising from the interaction between Byzantine fraction (through $\kappa$) and data heterogeneity ($\zeta$). In homogeneous settings ($\zeta=0$), this bias vanishes entirely.
\end{corollary}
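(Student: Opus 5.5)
The corollary follows directly from Theorem~\ref{thm:byz_nsgdm_convergence}; the plan is a term-by-term inspection of its bound.

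First, fix the constants. Take $\gamma_0$ as a fixed positive constant satisfying \eqref{eq:gamma0_choice}, for instance
\[
\gamma_0=\min\left\{\frac{1}{2L_1},\ \frac{1}{\sqrt{32}L_1},\ \frac{1}{\sqrt{128(1+2\kappa)}L_1}\right\}.
\]
This choice is admissible for every $K\ge0$, because $(K+1)^{1/4}\ge1$ makes the middle constraint weaker than the fixed value used here. The resulting $\gamma_0$ depends only on $L_1$ and $\kappa$, not on $K$. The quantities $f(x^0)-f^*$, $V_0$, $L_0$, $L_1$, $\Delta^*$, $\sigma$, $G$ and $\kappa$ are likewise independent of $K$. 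Note that $\Delta^*$ is finite by Assumption~\ref{as:lower_bound}, and $V_0$ is finite for any deterministic initialization.

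Next, classify the nine terms on the right-hand side of the theorem.

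\begin{itemize}
\item The term $4\kappa\zeta$ has no $K$-dependence. It is the floor.
\item Every other term is a $K$-independent constant times $(K+1)^{-p}$ with $p\in\{1/4,1/2,3/4\}$.
\item The slowest of these decay at exponent $1/4$. They are the $(f(x^0)-f^*)/\gamma_0$ term, the $(1+2\kappa)L_0\gamma_0$ term, the $(1+2\kappa)L_1^2\Delta^*\gamma_0$ term and both $\sigma$ terms.
\end{itemize}

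Since $(K+1)^{-p}\le (K+1)^{-1/4}\le K^{-1/4}$ for $p\ge1/4$ and $K\ge1$, the sum of these eight terms is at most $C\,K^{-1/4}$. Here $C$ is the sum of their coefficients (with the $e^{1/2}$ factors), and $C$ depends only on the problem constants and $\kappa$. This gives the stated $O(K^{-1/4})+4\kappa\zeta$ bound.

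For the remark about homogeneous data: if $\zeta=0$, the term $4\kappa\zeta$ vanishes and the bound is $O(K^{-1/4})$.

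The only point needing care is making the hidden constant in $O(\cdot)$ uniform in $K$. That is, $\gamma_0$ must not be allowed to shrink with $K$, which the explicit choice above guarantees. This is the main (mild) obstacle. The calling it ``irreducible'' is interpretive commentary rather than something to prove here.
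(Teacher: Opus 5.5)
Your proposal is correct and follows the same route as the paper, which treats the corollary as a direct term-by-term reading of Theorem~\ref{thm:byz_nsgdm_convergence}: every term except $4\kappa\zeta$ carries a factor $(K+1)^{-p}$ with $p\in\{1/4,1/2,3/4\}$. Your observation that $\gamma_0$ can be fixed independently of $K$, which keeps the hidden constant uniform, is left implicit in the paper, but it is exactly the detail needed.
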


\paragraph{Proof roadmap.} Our analysis proceeds in three steps:
\begin{enumerate}
    \item \textbf{Descent under normalization (Lemma~\ref{lem:descent}).} We show that the normalized update $x^{k+1} = x^k - \gamma_k \frac{v^{k}}{\|v^{k}\|}$ yields a descent inequality:
    \begin{equation*}
    f(x^{k+1}) \le f(x^k) - \gamma_k \|\nabla f(x^k)\| + 2\gamma_k \|v^{k} - \nabla f(x^k)\| + O(\gamma_k^2),
    \end{equation*}
    where progress depends on $\|\nabla f(x^k)\|$ minus an error term from the momentum-to-gradient gap $\|v^{k} - \nabla f(x^k)\|$. The normalization ensures the step size is bounded by $\gamma_k$, preventing the $(L_0,L_1)$-smoothness from causing divergence.
    
    \item \textbf{Bounding the momentum error (Lemmas~\ref{lem:avg_momentum_error} and~\ref{lem:byz_agg_decomp_new}).} We decompose this gap into two parts: (i)~the averaging error $\|\bar{v}^{k} - \nabla f(x^k)\|$ from honest workers' momentum tracking (Lemma~\ref{lem:avg_momentum_error}), and (ii)~the Byzantine-induced error $\|v^{k} - \bar{v}^{k}\|$ controlled by the $(\delta,\kappa)$-robust aggregator (Lemma~\ref{lem:byz_agg_decomp_new}).
    
    \item \textbf{Telescoping.} Summing the descent inequalities with carefully chosen exponential weights absorbs the coupling between function values and momentum errors, yielding the final rate.
\end{enumerate}
The key insight is that normalization step $x^{k+1} = x^k - \gamma_k \tfrac{v^{k}}{\|v^{k}\|}$ bounds the step size by $\gamma_k$, preventing the $(L_0,L_1)$-smoothness from causing divergence, while the robust aggregator limits Byzantine influence to a bias floor of $O(\kappa\zeta)$.

\begin{remark}
Following the analysis of \algname{NSGDM} by \citet{hubler2025from}, our convergence proof can be extended to the setting of heavy-tailed stochastic noise \citep{zhang2020why, simsekli2019tail}. In particular, Assumption~\ref{as:bounded_variance} can be relaxed to the weaker $p$-th moment condition
\begin{equation*}
    \EE[\nabla f_{\xi_i}(x)] = \nabla f_i(x), 
    \qquad 
    \EE\big[\|\nabla f_{\xi_i}(x) - \nabla f_i(x)\|^p\big] \le \sigma^p,
\end{equation*}
for some $p \in (1,2]$. 

Under this assumption, Lemma~\ref{lem:momentum_properties} must be adapted using arguments analogous to those in Lemma 16 of \citet{hubler2025from}. The remainder of the convergence analysis then proceeds along the same lines as in the current proof, with appropriate modifications to the stepsize $\gamma$ and momentum parameter $\eta$ to account for the heavier-tailed noise.
\end{remark}

% \begin{remark}
%     Following the proof of \algname{NSGDM} by \citet{hubler2025from}, one can extend our proof to the case of the heavy-tailed noise \citep{zhang2020why}, i.e., instead of Assumption~\ref{as:bounded_variance}, it is sufficient to use
%     \begin{equation}
%         \EE[\nabla f_{\xi_i}(x)]\;=\;\nabla f_i(x),\qquad \EE\big[\|\nabla f_{\xi_i}(x)-\nabla f_i(x)\|^p\big]\;\le\;\sigma^p,\label{eq:bounded_pth_moment}
%     \end{equation}
%     for some $p \in (1,2]$. More precisely, under the above condition, one needs to modify Lemma~\ref{lem:momentum_properties} following Lemma 16 from \cite{hubler2025from}; the rest of the proof will closely follow the current analysis up to the adjustments of $\gamma$ and $\eta$.
% \end{remark}

% The key insight is that the normalization step $x^{k+1} = x^k - \gamma_k \tfrac{v^{k}}{\|v^{k}\|}$ provides stability under $(L_0,L_1)$-smoothness by controlling the step size magnitude, preventing the gradient-dependent Lipschitz constant from causing divergence, consistent with recent analyses of normalized momentum under generalized smoothness \cite{khiriat2024}.

% \eduard{We need to add more discussion. How the bound is compared to the existing ones in the literature on $(L_0,L_1)$-smoothness? Comparison with the existing bounds for $L$-smooth problems? We need to answer these questions.}

\section{Experiments}

We evaluate \algname{Byz-NSGDM} on three distinct problems: a heterogeneous distributed learning task on MNIST, character-level language modeling, and a synthetic $(L_0,L_1)$-smooth optimization problem designed to validate our theoretical framework.

\subsection{Attack Strategies}

We evaluate our algorithm against several Byzantine attack strategies that represent different threat models in distributed learning, following common setups in prior work on non-IID Byzantine robustness \cite{karimireddy2020byzantine,allouah2023}.

\paragraph{Bit/Sign Flipping (BF).} Byzantine workers send the negation of their computed gradient: $g_i^k = -\nabla f_i(x^k)$ for $i \in \mathcal{B}$. This attack directly opposes the optimization direction and tests the algorithm's ability to handle adversarial gradients \cite{yin2018byzantine,blanchard2017machine}.

\paragraph{Label Flipping (LF).} For classification tasks, Byzantine workers flip the labels of their training data before computing gradients. Each sample with true label $y$ is trained with label $(y + c) \bmod C$, where $C$ is the number of classes. For MNIST with 10 classes, we use a shift of $c=5$. This creates consistent but misaligned gradient directions and is widely used to evaluate robustness under data poisoning and non-IID heterogeneity \cite{karimireddy2020byzantine}.

\paragraph{Mimic Attack.} Byzantine workers exploit data heterogeneity by initially behaving honestly for a warmup period of 50 iterations, then strategically deviating. After warmup, they send $g_i^k = -2\bar{g}^k$, where $\bar{g}^k$ is the average of honest gradients, aiming to reverse progress while avoiding early detection; see \cite{allouah2023} for analysis of mixing defenses against such strategies.

\paragraph{ALIE (A Little Is Enough).} Byzantine workers craft gradients to maximize damage while avoiding outlier detection. They send $v_i^k = \bar{v}^k + z\cdot\sigma^k$, where $\bar{v}^k = \frac{1}{G}\sum_{i\in \cG}v_i^k$ and $\sigma^k$ is the vector of coordinate-wise standard deviations of $\{v_i^k\}_{i \in \cG}$ \cite{baruch2019little}. We use $z = 1$ in our experiments.

\subsection{Experimental Setup}

\paragraph{Heterogeneous MNIST Classification.} We train a three-layer MLP with architecture (784, 128, 64, 10) on MNIST distributed across $n=20$ workers with $B=3$ Byzantine workers ($\delta=0.15$). We create strong data heterogeneity using a non-IID partitioning scheme where data is first sorted by label and then divided sequentially among workers, ensuring each worker receives a highly skewed label distribution. Training uses cross-entropy loss with batch size 64 over 50 epochs, with each epoch limited to 30 batches, yielding 1,500 total iterations. We tune learning rates individually for each configuration and use momentum parameter $\beta_k := 1-\eta_k = 0.9$ for all methods.

We compare three optimizer variants:
\begin{itemize}
    \item \textbf{Baseline}: Standard momentum SGD with constant learning rate
    \item \textbf{Baseline-Decay}: Momentum SGD with learning rate decay $\gamma_k = \gamma_0 / k^{0.5}$
    \item \textbf{Byz-NSGDM}: Our normalized gradient method with learning rate decay $\gamma_k = \gamma_0 / k^{0.5}$
\end{itemize}

\begin{remark}[Theory vs.\ practice step-schedules]
Our theoretical analysis (Theorem~\ref{thm:byz_nsgdm_convergence}) uses horizon-dependent schedules $\gamma \sim K^{-3/4}$ and $\eta \sim K^{-1/2}$ to derive clean convergence rates. In experiments, we use practical iteration-dependent schedules $\gamma_k \sim k^{-1/2}$ and constant $\beta = 1 - \eta = 0.9$, which are more common in deep learning. While the theoretical and practical schedules differ, our ablation study (Figure~\ref{fig:ablation}) confirms that performance is robust across momentum values in $[0.9, 0.99]$, and both schedules yield polynomial decay of the learning rate. Extending our analysis to iteration-dependent schedules with constant momentum is an interesting direction for future work.
\end{remark}

Each optimizer is combined with NNM pre-aggregation \cite{allouah2023} followed by one of three robust aggregators: Krum \cite{blanchard2017machine}, RFA (geometric median via smoothed Weiszfeld algorithm) \cite{pillutla2022robust}, and Coordinate-wise Median/Trimmed Mean (CM) \cite{yin2018byzantine}. We test against three attacks: Bit Flipping (BF), Label Flipping (LF), and Mimic \cite{allouah2023}. Each configuration is run with 3 random seeds, and we report mean accuracy with standard deviation bands.

\paragraph{Synthetic \texorpdfstring{$(L_0,L_1)$}{(L0,L1)}-Smooth Optimization.} We minimize the quartic objective:
\begin{equation*}
f(x) = \|x\|^4,
\end{equation*}
where $x \in \mathbb{R}^{10}$ is initialized at $x^0 = [1, 1, \ldots, 1]^\top$. This function is $(L_0,L_1)$-smooth with $\nabla f(x) = 4x\|x\|^2$, exhibiting a gradient Lipschitz constant $L(x) = O(\|x\|^2)$ that grows with the squared iterate norm. We distribute the optimization across $n=20$ workers with $B=3$ Byzantine workers.

Each honest worker computes the true gradient with added Gaussian noise and a fixed worker-specific shift: $g_i = \nabla f(x) + \xi + s_i$ where $\xi \sim \mathcal{N}(0, 10^{-5} I)$ represents stochastic noise and $s_i \sim \mathcal{N}(0, 10^{-3} I)$ is a fixed shift sampled once at initialization for each honest worker $i \in \mathcal{G}$. To ensure unbiased gradients, the shifts are constructed so that $\sum_{i \in \mathcal{G}} s_i = 0$. The fixed shifts $s_i$ model persistent heterogeneity across workers, ensuring each worker contributes a slightly different gradient direction even for the same iterate. We apply NNM pre-aggregation \cite{allouah2023} with $n-B=17$ nearest neighbors before robust aggregation. Training runs for 3,000 iterations. We test against three attacks: Bit Flipping (BF), Mimic, and ALIE. Each configuration uses 3 random seeds, and we report mean gradient norm evolution with standard deviation bands.

We compare the same three optimizer variants (Baseline, Baseline-Decay, Byz-NSGDM) combined with three aggregators (Krum \cite{blanchard2017machine}, RFA \cite{pillutla2022robust}, CM \cite{yin2018byzantine}). Learning rates are tuned per configuration by evaluating candidates on 1,000 iterations and selecting the rate that minimizes final gradient norm.

\subsection{Results on Heterogeneous MNIST}

Table~\ref{tab:all_results} reports final test accuracy (mean $\pm$ std over 3 seeds) for each attack-aggregator-optimizer combination. \algname{Byz-NSGDM} consistently achieves the highest accuracy under Bit Flipping (BF), reaching $86.0$--$86.1\%$ compared to $81.8\%$ for Baseline-Decay and $76.5$--$78.8\%$ for Baseline. Under Label Flipping (LF) and Mimic attacks, all methods perform comparably with accuracies in the $88$--$96\%$ range. The low standard deviations for \algname{Byz-NSGDM} under BF ($\leq 0.2\%$) indicate stable convergence across seeds. Full training curves are provided in Appendix~\ref{appendix:curves}.

\begin{table}[ht]
\centering
\caption{Experimental results across all tasks. MNIST: test accuracy (\%, $\uparrow$); Synthetic: gradient norm ($\times 10^{-6}$, $\downarrow$); LM: validation perplexity ($\downarrow$). Mean $\pm$ std over 3 seeds. Best in \textbf{bold}.}
\label{tab:all_results}
\small
\setlength{\tabcolsep}{2.5pt}
\begin{tabular}{ll|ccc|ccc|ccc}
\toprule
& & \multicolumn{3}{c|}{MNIST (Acc. \%)} & \multicolumn{3}{c|}{Synthetic ($\|\nabla f\| \times 10^{-6}$)} & \multicolumn{3}{c}{LM (Perplexity)} \\
Attack & Agg. & Base & Decay & Ours & Base & Decay & Ours & Base & Decay & Ours \\
\midrule
\multirow{3}{*}{BF}
& RFA & $78.6{\scriptstyle\pm4.0}$ & $84.1{\scriptstyle\pm0.2}$ & $\mathbf{86.0{\scriptstyle\pm0.2}}$ & $12.8{\scriptstyle\pm1.2}$ & $38.2{\scriptstyle\pm32.5}$ & $\mathbf{7.3{\scriptstyle\pm0.4}}$ & $11.64{\scriptstyle\pm0.02}$ & $12.28{\scriptstyle\pm0.09}$ & $\mathbf{10.22{\scriptstyle\pm0.13}}$ \\
& Krum & $76.5{\scriptstyle\pm11.7}$ & $81.8{\scriptstyle\pm0.3}$ & $\mathbf{86.0{\scriptstyle\pm0.1}}$ & $14.0{\scriptstyle\pm2.0}$ & $84.5{\scriptstyle\pm9.8}$ & $\mathbf{7.0{\scriptstyle\pm0.5}}$ & $11.68{\scriptstyle\pm0.04}$ & $12.31{\scriptstyle\pm0.08}$ & $\mathbf{10.34{\scriptstyle\pm0.05}}$ \\
& CM & $78.8{\scriptstyle\pm4.2}$ & $81.8{\scriptstyle\pm0.3}$ & $\mathbf{86.1{\scriptstyle\pm0.1}}$ & $13.0{\scriptstyle\pm1.0}$ & $64.1{\scriptstyle\pm10.6}$ & $\mathbf{7.7{\scriptstyle\pm0.6}}$ & $11.66{\scriptstyle\pm0.02}$ & $12.35{\scriptstyle\pm0.14}$ & $\mathbf{10.34{\scriptstyle\pm0.06}}$ \\
\midrule
\multirow{3}{*}{LF}
& RFA & $91.9{\scriptstyle\pm4.6}$ & $\mathbf{95.2{\scriptstyle\pm4.6}}$ & $92.2{\scriptstyle\pm4.5}$ & -- & -- & -- & -- & -- & -- \\
& Krum & $\mathbf{91.9{\scriptstyle\pm4.6}}$ & $88.8{\scriptstyle\pm0.0}$ & $89.1{\scriptstyle\pm0.0}$ & -- & -- & -- & -- & -- & -- \\
& CM & $\mathbf{92.0{\scriptstyle\pm4.6}}$ & $91.9{\scriptstyle\pm4.5}$ & $89.1{\scriptstyle\pm0.1}$ & -- & -- & -- & -- & -- & -- \\
\midrule
\multirow{3}{*}{Mimic}
& RFA & $90.9{\scriptstyle\pm4.0}$ & $94.0{\scriptstyle\pm4.0}$ & $\mathbf{95.6{\scriptstyle\pm0.1}}$ & $13.1{\scriptstyle\pm1.4}$ & $193{\scriptstyle\pm2}$ & $\mathbf{6.3{\scriptstyle\pm0.5}}$ & $11.40{\scriptstyle\pm1.22}$ & $12.57{\scriptstyle\pm0.08}$ & $\mathbf{10.09{\scriptstyle\pm0.08}}$ \\
& Krum & $91.7{\scriptstyle\pm4.6}$ & $\mathbf{92.2{\scriptstyle\pm4.5}}$ & $91.1{\scriptstyle\pm4.6}$ & $12.7{\scriptstyle\pm0.9}$ & $93.7{\scriptstyle\pm7.0}$ & $\mathbf{5.8{\scriptstyle\pm0.3}}$ & $11.06{\scriptstyle\pm0.10}$ & $12.59{\scriptstyle\pm0.04}$ & $\mathbf{10.65{\scriptstyle\pm0.11}}$ \\
& CM & $\mathbf{92.6{\scriptstyle\pm3.9}}$ & $92.2{\scriptstyle\pm4.4}$ & $92.3{\scriptstyle\pm4.1}$ & $13.1{\scriptstyle\pm1.4}$ & $64.1{\scriptstyle\pm10.5}$ & $\mathbf{5.9{\scriptstyle\pm0.5}}$ & $11.76{\scriptstyle\pm0.66}$ & $12.57{\scriptstyle\pm0.10}$ & $\mathbf{10.08{\scriptstyle\pm0.03}}$ \\
\midrule
\multirow{3}{*}{ALIE}
& RFA & -- & -- & -- & $12.8{\scriptstyle\pm1.7}$ & $82.6{\scriptstyle\pm12.6}$ & $\mathbf{7.7{\scriptstyle\pm0.8}}$ & $12.20{\scriptstyle\pm1.44}$ & $12.43{\scriptstyle\pm0.05}$ & $\mathbf{10.59{\scriptstyle\pm0.03}}$ \\
& Krum & -- & -- & -- & $12.7{\scriptstyle\pm0.9}$ & $28.6{\scriptstyle\pm11.3}$ & $\mathbf{7.6{\scriptstyle\pm0.5}}$ & $11.55{\scriptstyle\pm0.90}$ & $12.01{\scriptstyle\pm0.05}$ & $\mathbf{10.73{\scriptstyle\pm0.11}}$ \\
& CM & -- & -- & -- & $12.7{\scriptstyle\pm1.7}$ & $28.3{\scriptstyle\pm10.3}$ & $\mathbf{7.8{\scriptstyle\pm0.8}}$ & $12.06{\scriptstyle\pm0.63}$ & $12.01{\scriptstyle\pm0.01}$ & $\mathbf{10.66{\scriptstyle\pm0.07}}$ \\
\bottomrule
\end{tabular}
\end{table}

\subsection{Results on Synthetic \texorpdfstring{$(L_0,L_1)$}{(L0,L1)}-Smooth Optimization}

Table~\ref{tab:all_results} (middle columns) reports final gradient norms for the quartic optimization task. Lower values indicate better convergence toward $x^* = 0$. \algname{Byz-NSGDM} consistently achieves the lowest final gradient norms across all configurations, with values around $6$--$7 \times 10^{-6}$ compared to $1.3$--$1.4 \times 10^{-5}$ for Baseline and $4$--$9 \times 10^{-5}$ for Baseline-Decay. This represents a $2$--$10\times$ improvement over baselines. Full convergence curves are provided in Appendix~\ref{appendix:curves}.

These results validate our theoretical contributions: normalized gradient descent with momentum provides superior Byzantine resilience under $(L_0,L_1)$-smoothness.

\subsection{Larger-Scale Experiment: Character-Level Language Modeling}

To demonstrate scalability beyond simple classification, we evaluate \algname{Byz-NSGDM} on character-level language modeling using Shakespeare ($\approx$1M chars, vocab=65). We train a small GPT (4 layers, 128-dim, 4 heads, $\approx$0.4M params) with $n=20$ workers and $B=3$ Byzantine workers. Non-IID partitioning assigns contiguous text chunks to each worker.

Table~\ref{tab:all_results} (right columns) shows final validation perplexity. All methods reduce perplexity from $>60$ (random) to $\approx$10--12, demonstrating that Byzantine-robust training is viable for language modeling. \algname{Byz-NSGDM} achieves competitive or best perplexity in most configurations. Training curves are in Appendix~\ref{appendix:curves}.

\subsection{Ablation Study: Sensitivity to Hyperparameters}

We conduct an ablation study examining how \algname{Byz-NSGDM} performs across a grid of momentum values $\beta \in \{0.0, 0.3, 0.5, 0.7, 0.9, 0.95, 0.99\}$ and learning rates $\gamma_0 \in \{0.001, 0.005, 0.01, 0.05, 0.1, 0.5\}$ on the heterogeneous MNIST task with Bit Flipping attack and RFA aggregator. Results are averaged over 3 seeds.

\begin{figure}[H]
    \centering
    \includegraphics[width=\textwidth]{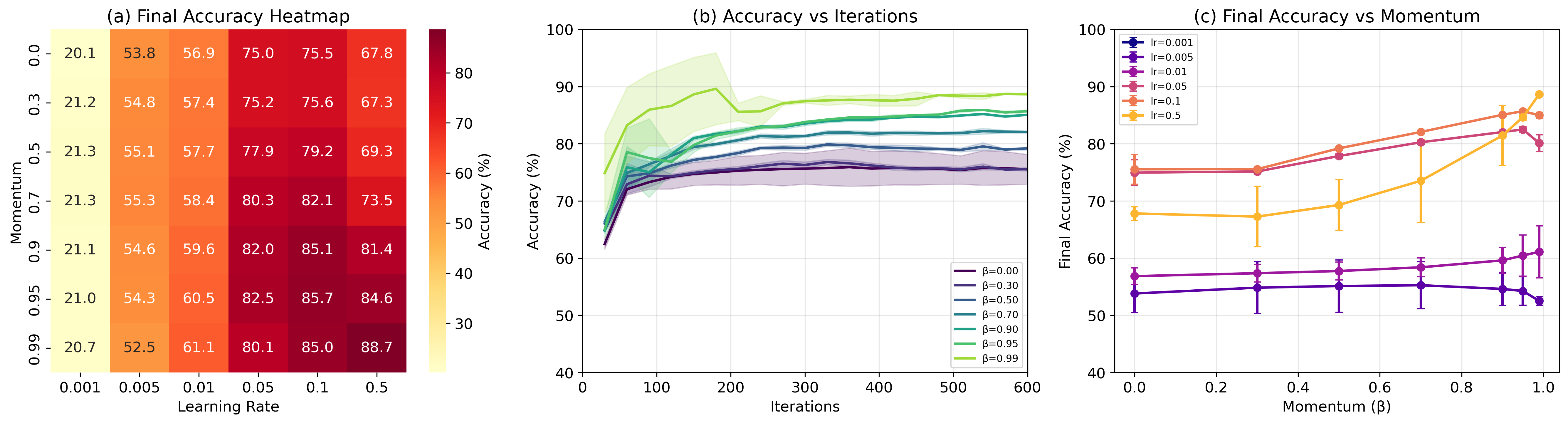}
    \caption{Ablation study on heterogeneous MNIST under Bit Flipping attack with RFA aggregation. (a) Final test accuracy heatmap across momentum ($\beta$) and learning rate ($\gamma_0$). (b) Training curves for each momentum value using its best learning rate. (c) Final accuracy vs momentum for different learning rates.}
    \label{fig:ablation}
\end{figure}

Figure~\ref{fig:ablation} shows the results. The key findings are:
\begin{itemize}
    \item \textbf{Momentum:} Any momentum parameter in the range $[0.9, 0.99]$ consistently outperforms momentum values less than $0.9$. Based on this observation, we adopt $\beta = 0.9$ as a standard choice throughout our experiments.
    \item \textbf{Learning rate:} The optimal learning rate varies across all momentum values and experimental configurations, requiring careful tuning. We observed that optimal learning rates for Baseline-Decay and \algname{Byz-NSGDM} are typically larger and in the range $[0.1, 1.0]$, while the Baseline method requires smaller learning rates to achieve stable convergence.
\end{itemize}

% \section{Conclusion}

\section{Conclusion}

In this work, we analyzed the behavior of Byzantine-robust normalized stochastic gradient descent with momentum (\algname{Byz-NSGDM}) under the generalized $(L_0,L_1)$-smoothness assumption. While normalized momentum combined with robust aggregation has been studied under classical $L$-smoothness and homogeneous settings, its theoretical guarantees under state-dependent smoothness and gradient heterogeneity were previously not understood. Our results provide a convergence analysis in this more general regime.

We established a non-asymptotic convergence rate of $O(K^{-1/4})$ up to an explicit Byzantine bias floor that depends on the robustness coefficient of the aggregation rule and the level of gradient heterogeneity. This characterization clarifies how generalized smoothness, heterogeneity, and adversarial corruption jointly affect the attainable accuracy.

Our empirical results on heterogeneous MNIST classification, synthetic $(L_0,L_1)$-smooth optimization, and character-level language modeling demonstrate that normalization combined with robust aggregation provides stable and competitive performance across diverse Byzantine attack strategies and aggregation rules. The experiments further highlight the practical robustness of the method across a wide range of hyperparameter choices.

Overall, our study bridges Byzantine-robust distributed optimization and optimization under generalized smoothness, and provides theoretical and empirical evidence that normalized momentum remains effective in adversarial and heterogeneous $(L_0,L_1)$-smooth settings. Future work includes extending the analysis to adaptive step-size schemes, tightening the bias floor characterization, and understanding optimal complexity limits under generalized smoothness and adversarial noise.

\section*{Code Availability}

Our code and experiments are publicly available at \url{https://github.com/armanbolatov/byz_nsgdm}.

\section*{LLM Usage Statement}

We used large language models (LLMs) to assist with code generation and visualization for experiments, to verify theorems and lemmas, to guide the proving process, and to polish the manuscript. The authors are fully responsible for the entire content and methodology of this paper.

\bibliography{refs}

\newpage

\appendix

% \section{Assumptions and Definition}

% The assumptions (\ref{as:lower_bound}--\ref{as:heterogeneity_detailed}) and the robustness definition (Definition~\ref{def:robust_agg}) are stated in the main paper for readability. We refer to those labels throughout this supplement.

% --- Technical Lemmas copied verbatim from draft ---
\section{Technical Lemmas}

\begin{lemma}[Lemma 1 from \citet{khiriat2024}]\label{lem:L0L1_extended}
Let $f_i$ be $(L_0,L_1)$-smooth for $i \in \cG$. Then the following properties hold:
\begin{enumerate}
    \item \textbf{Gradient Lipschitz bound:}
    \begin{equation}
        \|\nabla f_i(x) - \nabla f_i(y)\| \leq (L_0 + L_1\|\nabla f_i(y)\|)\exp(L_1\|x-y\|)\|x-y\|. \label{eq:grad_lip}
    \end{equation}
    
    \item \textbf{Function value bound:}
    \begin{equation*}
        f_i(y) \leq f_i(x) + \inp{\nabla f_i(x)}{y-x} + \frac{L_0 + L_1\|\nabla f_i(x)\|}{2}\exp(L_1\|x-y\|)\|y-x\|^2. 
    \end{equation*}
    
    \item \textbf{Gradient norm bound:}
    \begin{equation}
        \frac{\|\nabla f_i(x)\|^2}{4(L_0 + L_1\|\nabla f_i(x)\|)} \leq f_i(x) - f_i^*. \label{eq:grad_norm_bound}
    \end{equation}
    
    \item \textbf{Average function smoothness:}
    \begin{equation}
        f(y) \leq f(x) + \inp{\nabla f(x)}{y-x} + \frac{L_0 + \frac{L_1}{G}\sum_{i\in\cG}\|\nabla f_i(x)\|}{2}\exp(L_1\|x-y\|)\|y-x\|^2. \label{eq:avg_smooth}
    \end{equation}
\end{enumerate}
\end{lemma}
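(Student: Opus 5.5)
The statement just above is Lemma~\ref{lem:L0L1_extended}, a list of standard consequences of $(L_0,L_1)$-smoothness (Assumption~\ref{as:L0L1_smoothness}), cited from \citet{khiriat2024}. I would prove the four items in order, using a Grönwall-type argument along segments.

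\textbf{Item 1.} Fix $x,y$ and set $u(t)=y+t(x-y)$ and $\phi(t)=\|\nabla f_i(u(t))-\nabla f_i(y)\|$. Apply Assumption~\ref{as:L0L1_smoothness} between $u(s)$ and $u(t)$ for $t>s$. Bounding the supremum of $\|\nabla f_i\|$ on $[u(s),u(t)]$ by $\|\nabla f_i(y)\|+\sup_{r\le t}\phi(r)$ gives, for $\Phi(t)=\sup_{r\le t}\phi(r)$, the estimate
\[
\Phi'(t)\le \big(L_0+L_1\|\nabla f_i(y)\|+L_1\Phi(t)\big)\|x-y\|,
\]
interpreted in the sense of upper Dini derivatives; the Lipschitz estimate makes $\Phi$ locally Lipschitz. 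Grönwall with $\Phi(0)=0$ yields
\[
\Phi(1)\le \frac{L_0+L_1\|\nabla f_i(y)\|}{L_1}\big(e^{L_1\|x-y\|}-1\bigr).
\]
Combining this with $e^a-1\le a e^a$ gives \eqref{eq:grad_lip}. The main technical point is justifying the differential inequality for the nonsmooth $\Phi$. I would avoid it by partitioning $[0,1]$ into $N$ pieces, applying the assumption on each piece, iterating the resulting discrete recursion, and letting $N\to\infty$.

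\textbf{Item 2.} Write $f_i(y)-f_i(x)-\langle\nabla f_i(x),y-x\rangle=\int_0^1\langle\nabla f_i(x+t(y-x))-\nabla f_i(x),y-x\rangle\,dt$. Apply Item~1 with base point $x$ at distance $t\|y-x\|$. The integrand is then at most $(L_0+L_1\|\nabla f_i(x)\|)\,e^{L_1 t\|y-x\|}\,t\|y-x\|^2$. Bounding $e^{L_1t\|y-x\|}\le e^{L_1\|y-x\|}$ and using $\int_0^1 t\,dt=1/2$ gives the claim.

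\textbf{Item 3.} Take $y=x-\frac{\nabla f_i(x)}{L_0+L_1\|\nabla f_i(x)\|}$. Then $\|y-x\|\le 1/L_1$, so $e^{L_1\|y-x\|}\le e$. Item~2 gives
\[
f_i^*\le f_i(y)\le f_i(x)-\frac{\|\nabla f_i(x)\|^2}{L_0+L_1\|\nabla f_i(x)\|}\Big(1-\frac e2\Big),
\]
but $1-e/2<0$, so this step length fails. Scaling the step by $c=1/2$ instead gives $\|y-x\|\le 1/(2L_1)$ and $e^{1/2}\approx1.65$. The decrease coefficient becomes $c-\tfrac{c^2}{2}e^{1/2}=\tfrac12-\tfrac{e^{1/2}}{8}\approx 0.294\ge\tfrac14$, which yields \eqref{eq:grad_norm_bound}.

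\textbf{Item 4.} Average Item~2 over $i\in\cG$. The exponential factor $e^{L_1\|x-y\|}$ does not depend on $i$, so it factors out, and averaging the constants $L_0+L_1\|\nabla f_i(x)\|$ gives \eqref{eq:avg_smooth}.

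I expect Item~1 to be the only real obstacle, specifically the rigorous Grönwall step with the supremum over the segment. The remaining items are direct integration and a one-line choice of stepsize.
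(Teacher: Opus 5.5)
The paper does not prove this lemma itself (it imports it from \citet{khiriat2024}), and your proposal is correct and follows the standard route: a Gr\"onwall/discretized bound along the segment for item 1, integrating item 1 for item 2, the step $x-\tfrac{\nabla f_i(x)}{2(L_0+L_1\|\nabla f_i(x)\|)}$ for item 3 (using $e^{1/2}\le 2$ there gives the coefficient $\tfrac12-\tfrac14=\tfrac14$ exactly), and averaging with the $i$-independent factor $e^{L_1\|x-y\|}$ for item 4. The only unstated point is that your recursion in item 1 needs $\Phi(1)<\infty$, i.e., $\nabla f_i$ bounded on the segment (for instance, $\nabla f_i$ continuous), which the cited literature also assumes implicitly.
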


\begin{lemma}[Lemma 2 from \citet{khiriat2024}]\label{lem:sum_grad_norms}
Under Assumptions \ref{as:lower_bound} and \ref{as:L0L1_smoothness}, for any $x \in \R^d$:
\begin{equation}
    \frac{1}{G}\sum_{i\in\cG}\|\nabla f_i(x)\| \leq 8L_1(f(x) - f^*) + \frac{8L_1}{G}\sum_{i\in\cG}(f^* - f_i^*) + \frac{L_0}{L_1}. \label{eq:sum_of_grad_norms_bound}
\end{equation}
\end{lemma}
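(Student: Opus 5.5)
The approach is to derive a pointwise bound on each $\|\nabla f_i(x)\|$ from the gradient norm bound~\eqref{eq:grad_norm_bound} of Lemma~\ref{lem:L0L1_extended}, and then average over $i\in\cG$.

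\textbf{Step 1: a per-worker bound.} Fix $i\in\cG$ and $x\in\R^d$, and write $a \eqdef \|\nabla f_i(x)\|$. By Assumption~\ref{as:lower_bound}, $f_i(x)-f_i^*\ge 0$, so~\eqref{eq:grad_norm_bound} rearranges to
\begin{equation*}
a^2 \le 4(L_0+L_1 a)\big(f_i(x)-f_i^*\big).
\end{equation*}
I will split into two cases according to which term in $L_0+L_1a$ dominates.
\begin{itemize}
\item If $L_1 a \ge L_0$, then $L_0+L_1a\le 2L_1 a$, so $a^2 \le 8L_1 a\,(f_i(x)-f_i^*)$. Dividing by $a$ (the case $a=0$ is trivial) gives $a\le 8L_1(f_i(x)-f_i^*)$.
\item If $L_1 a < L_0$, then directly $a< L_0/L_1$.
\end{itemize}
Since both right-hand sides are nonnegative, in either case
\begin{equation*}
\|\nabla f_i(x)\|\le 8L_1\big(f_i(x)-f_i^*\big)+\frac{L_0}{L_1}.
\end{equation*}

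\textbf{Step 2: averaging.} Average this bound over $i\in\cG$ and use the identity
\begin{equation*}
\frac1G\sum_{i\in\cG}\big(f_i(x)-f_i^*\big)=f(x)-f^*+\frac1G\sum_{i\in\cG}\big(f^*-f_i^*\big),
\end{equation*}
which follows from $f=\frac1G\sum_{i\in\cG} f_i$ by adding and subtracting $f^*$. Substituting yields exactly~\eqref{eq:sum_of_grad_norms_bound}.

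There is no real obstacle here. The only points needing care are the nonnegativity of $f_i(x)-f_i^*$, which lets the two cases merge into the sum of both bounds, and the degenerate case $\nabla f_i(x)=0$ in the division.
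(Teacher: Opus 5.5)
Your proof is correct and is essentially the paper's argument. The paper defers to Lemma~2 of \citet{khiriat2024}; its content, restated in the proof of Lemma~\ref{lem:momentum_properties}, is exactly your route: derive the per-worker bound $\|\nabla f_i(x)\|\le 8L_1(f_i(x)-f_i^*)+\frac{L_0}{L_1}$ from \eqref{eq:grad_norm_bound} (you do this with a case split on $L_1\|\nabla f_i(x)\|$ versus $L_0$), then average over $\cG$ using the exact identity $\frac1G\sum_{i\in\cG}(f_i(x)-f_i^*)=f(x)-f^*+\frac1G\sum_{i\in\cG}(f^*-f_i^*)$.
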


\begin{lemma}\label{lem:momentum_properties}
Let Assumptions \ref{as:L0L1_smoothness}, \ref{as:bounded_variance}, \ref{as:heterogeneity_detailed} hold.  
For each good worker $i\in\cG$ with momentum update

\[
v_i^{k}=(1-\eta)v_i^{k-1}+\eta\nabla f_{\xi_i^{k-1}}(x^{k-1}),
\]

consider normalized steps $\|x^{k}-x^{k-1}\|=\gamma$ (i.e., $\gamma_k\equiv\gamma>0$ in Algorithm~\ref{alg:Byz-NSGDM}) and a constant momentum parameter $\eta\in(0,1]$. Then:

\begin{enumerate}
\item \textbf{Bias.} For all $k\ge1$,

\[
\EE\left[v_i^{k}\big|x^{k-1}\right]=(1-\eta)v_i^{k-1}+\eta\nabla f_i(x^{k-1}).
\]

\item \textbf{Individual momentum tracking error.} For all $k\ge1$,
\begin{align}
\EE\big[\|v_i^{k}-\nabla f_i(x^{k-1})\|\big]
&\le (1-\eta)^{k}\EE\|v_i^0-\nabla f_i(x^0)\|
+\frac{2L_0\gamma e^{\gamma L_1}}{\eta} \notag\\
&\quad + 8L_1^2\gamma e^{\gamma L_1}\sum_{t=0}^{k-1}(1-\eta)^{k-1-t}\EE\big[f(x^t)-f^*\big] \notag\\
&\quad + \frac{8L_1^2\gamma e^{\gamma L_1}}{\eta}\big(f^*-f_i^*\big)
+\sqrt{\eta}\sigma.
\label{eq:vk_nablafxk_bound}
\end{align}

\item \textbf{Cross-worker momentum difference.} Let the expectation be over the stochastic gradients and an independent uniform draw of $i,j\in\cG$. Then, for all $k\ge1$,
\begin{align}
\EE\big[\|v_i^{k} - v_j^{k}\|\big]
&\le 2\zeta
+2(1-\eta)^{k} V_0
+2\sqrt{\eta}\sigma
+\frac{4L_0\gamma e^{\gamma L_1}}{\eta} \notag\\
&\quad + 16L_1^2\gamma e^{\gamma L_1}\sum_{t=0}^{k-1}(1-\eta)^{k-1-t}\EE\big[f(x^t)-f^*\big] \notag\\
&\quad + \frac{16L_1^2\gamma e^{\gamma L_1}}{\eta}\Delta^*,
\label{eq:cross_worker_diff}
\end{align}
where $V_0 \eqdef \tfrac{1}{G}\sum_{i\in\cG}\EE\|v_i^0-\nabla f_i(x^0)\|$ and $\Delta^*\eqdef \tfrac{1}{G}\sum_{i\in\cG}(f^*-f_i^*)$.
\end{enumerate}
\end{lemma}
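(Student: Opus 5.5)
The plan is to prove the three items in order, with the second doing most of the work and the third following from it.

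\textbf{Bias.} This is immediate. Conditionally on $x^{k-1}$ (and the past, which fixes $v_i^{k-1}$), Assumption~\ref{as:bounded_variance} gives $\EE[\nabla f_{\xi_i^{k-1}}(x^{k-1})\mid x^{k-1}]=\nabla f_i(x^{k-1})$. Linearity of the momentum update then yields the claim.

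\textbf{Tracking error.} Set $e_i^k \eqdef v_i^k-\nabla f_i(x^{k-1})$, with the convention $e_i^0 = v_i^0-\nabla f_i(x^0)$ (here $x^{-1}\eqdef x^0$). Unrolling the update gives
\begin{equation*}
e_i^k=(1-\eta)e_i^{k-1}+(1-\eta)\big(\nabla f_i(x^{k-2})-\nabla f_i(x^{k-1})\big)+\eta\,\epsilon_i^{k-1},
\end{equation*}
where $\epsilon_i^{t}\eqdef\nabla f_{\xi_i^t}(x^t)-\nabla f_i(x^t)$. Iterating expresses $e_i^k$ as the sum of three pieces: the initial term $(1-\eta)^k e_i^0$, a drift sum $\sum_t (1-\eta)^{k-t}(\nabla f_i(x^{t-1})-\nabla f_i(x^{t}))$, and a noise sum $\eta\sum_t(1-\eta)^{k-1-t}\epsilon_i^t$. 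I take norms and expectations and bound each piece separately.

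\begin{itemize}
\item \emph{Initial term.} This gives $(1-\eta)^k\EE\|e_i^0\|$ directly.
\item \emph{Noise sum.} The $\epsilon_i^t$ form a martingale difference sequence, so by Jensen the expected norm is at most the square root of the second moment. Orthogonality then gives the bound $\eta\sigma\big(\sum_t(1-\eta)^{2(k-1-t)}\big)^{1/2}\le \eta\sigma/\sqrt{\eta}=\sqrt{\eta}\,\sigma$.
\item \emph{Drift sum.} Apply \eqref{eq:grad_lip} with $\|x^t-x^{t-1}\|=\gamma$ to get $\|\nabla f_i(x^{t-1})-\nabla f_i(x^t)\|\le(L_0+L_1\|\nabla f_i(x^t)\|)\gamma e^{\gamma L_1}$. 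The $L_0$ part sums geometrically to $L_0\gamma e^{\gamma L_1}/\eta$; I allow a factor of $2$ for index slack.
\end{itemize}

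The main obstacle is the $L_1\|\nabla f_i(x^t)\|$ part of the drift, because it must be expressed through $f(x^t)-f^*$ and a worker-specific constant. Lemma~\ref{lem:sum_grad_norms} only controls the average over workers, so for an individual worker I would instead use \eqref{eq:grad_norm_bound}. Solving that quadratic inequality gives $\|\nabla f_i(x)\|\le 8L_1(f_i(x)-f_i^*)+L_0/L_1$ (up to constants). Writing $f_i(x)-f_i^*=(f_i(x)-f^*)+(f^*-f_i^*)$ introduces a gap between $f_i$ and $f$. The route I would take is to pass through the $G$-average, where $\frac1G\sum_i(f_i-f^*)=f-f^*$, which is exactly what the lemma's per-worker form hides. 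Granting this, the weighted sum produces $8L_1^2\gamma e^{\gamma L_1}\sum_t(1-\eta)^{k-1-t}\EE[f(x^t)-f^*]$, plus $8L_1^2\gamma e^{\gamma L_1}(f^*-f_i^*)/\eta$ from the geometric sum, plus an extra $L_0\gamma e^{\gamma L_1}/\eta$ that is absorbed into the factor $2$.

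\textbf{Cross-worker difference.} Use the triangle inequality
\begin{equation*}
\|v_i^k-v_j^k\|\le\|e_i^k\|+\|e_j^k\|+\|\nabla f_i(x^{k-1})-\nabla f(x^{k-1})\|+\|\nabla f_j(x^{k-1})-\nabla f(x^{k-1})\|.
\end{equation*}
Average over an independent uniform draw of $i,j\in\cG$. By Jensen and Assumption~\ref{as:heterogeneity_detailed}, each heterogeneity term contributes at most $\zeta$. Item 2, averaged over $i$, turns $\EE\|e_i^0\|$ into $V_0$ and $f^*-f_i^*$ into $\Delta^*$; doubling everything gives the stated constants.
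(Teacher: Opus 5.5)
Item~2 has a gap exactly where you flagged it, and no argument can close it. Turning $8L_1^2(f_i(x^t)-f_i^*)$ into $8L_1^2\big[(f(x^t)-f^*)+(f^*-f_i^*)\big]$ requires $f_i(x^t)\le f(x^t)$. That cannot hold for every $i\in\cG$ unless all the $f_i(x^t)$ coincide. Passing through the $G$-average only gives the identity $\frac1G\sum_{i\in\cG}(f_i(x)-f_i^*)=f(x)-f^*+\Delta^*$, which is a statement about the average; you cannot ``grant'' it back to a single worker.

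In fact the per-worker bound \eqref{eq:vk_nablafxk_bound} is false as stated whenever $G\ge 2$. Replace one good worker's $f_i$ by $f_i+c$. The following are all unchanged: gradients, stochastic gradients, iterates, the left-hand side, $\EE\|v_i^0-\nabla f_i(x^0)\|$, and $f(x^t)-f^*$. But $f^*-f_i^*$ decreases by $c(G-1)/G$, so for large $c$ the right-hand side becomes negative. The paper's proof makes the same move, asserting $8L_1(f_i(x)-f_i^*)\le 8L_1(f(x)-f^*)+8L_1(f^*-f_i^*)$ without justification, so it shares this flaw.

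What your route does prove correctly is the worker-averaged form of Item~2. Keep $L_1\|\nabla f_i(x^t)\|$ in the drift bound and average over $i\in\cG$ first. Only then apply $\frac1G\sum_{i\in\cG}\|\nabla f_i(x)\|\le 8L_1(f(x)-f^*)+8L_1\Delta^*+L_0/L_1$, which is Lemma~\ref{lem:sum_grad_norms}, or equivalently your one-point bound averaged using the identity above. This gives \eqref{eq:vk_nablafxk_bound} with $V_0$ and $\Delta^*$ in place of $\EE\|v_i^0-\nabla f_i(x^0)\|$ and $f^*-f_i^*$. That averaged form is all that Item~3, Lemma~\ref{lem:avg_momentum_error} and Theorem~\ref{thm:byz_nsgdm_convergence} use. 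So your Item~3 argument is sound, provided you feed it this averaged bound directly rather than ``averaging Item~2.''

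Two details of your write-up are cleaner than the paper's:
\begin{itemize}
\item Your convention $x^{-1}\eqdef x^0$ is the right one; the paper's $x^{-1}\eqdef 0$ does not make the first drift term vanish.
\item You evaluate the heterogeneity terms at $x^{k-1}$, which matches $e_i^k$; the paper uses $x^k$.
\end{itemize}
One small correction: the factor $2$ on $L_0$ is used exactly once, for the $L_1\cdot L_0/L_1$ term from the gradient-norm bound. There is no index slack to pay for, since $\sum_{t=1}^{k-1}(1-\eta)^{k-t}\le 1/\eta$.
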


\begin{proof}

The proof of these lemmas follows a very similar structure to that of the lemmas in \cite{khiriat2024}.

\textbf{Part 1.} This follows directly from the unbiased oracle in Assumption \ref{as:bounded_variance}:
$\EE[\nabla f_{\xi_i^{k-1}}(x^{k-1})\mid x^{k-1}]=\nabla f_i(x^{k-1})$.

\medskip
\noindent\textbf{Part 2.}
Define the tracking error $H_i^{k}\eqdef v_i^{k}-\nabla f_i(x^{k-1})$ and the auxiliary terms

\[
U_i^{k-1}\eqdef\nabla f_{\xi_i^{k-1}}(x^{k-1})-\nabla f_i(x^{k-1}),\qquad
G_i^{k}\eqdef\nabla f_i(x^{k-1})-\nabla f_i(x^{k}),
\]
where we also set $x^{-1} :=0$, yielding $G_i^{0} = 0$. From the momentum update $v_i^{k}=(1-\eta)v_i^{k-1}+\eta\nabla f_{\xi_i^{k-1}}(x^{k-1})$, we have
\begin{equation}\label{eq:Hi_recursion}
H_i^{k}=(1-\eta)H_i^{k-1}+(1-\eta)G_i^{k-1}+\eta U_i^{k-1}.
\end{equation}
Unrolling gives

\[
H_i^{k}=(1-\eta)^{k+1}H_i^{0}
+\sum_{t=0}^{k}(1-\eta)^{k+1-t}G_i^{t}
+\eta\sum_{t=0}^{k}(1-\eta)^{k-t}U_i^{t}.
\]

Taking norms, triangle inequality, and expectations,
\begin{align}
\EE\|H_i^{k}\|
&\le (1-\eta)^{k+1}\EE\|H_i^{0}\|
+\sum_{t=0}^{k}(1-\eta)^{k+1-t}\EE\|G_i^{t}\|
+\eta\EE\Big\|\sum_{t=0}^{k}(1-\eta)^{k-t}U_i^{t}\Big\|.
\label{eq:Hi_three_terms}
\end{align}

By Lemma~\ref{lem:L0L1_extended} (see \eqref{eq:grad_lip}) with $\|x^{t}-x^{t-1}\|=\gamma$,

\[
\|\nabla f_i(x^{t-1})-\nabla f_i(x^{t})\|
\le \big(L_0+L_1\|\nabla f_i(x^{t-1})\|\big)e^{\gamma L_1}\gamma.
\]

From Lemma~\ref{lem:L0L1_extended} (see \eqref{eq:grad_norm_bound}) and the same algebra as Lemma~2 in \citet{khiriat2024} applied to each $f_i$,
\(
\|\nabla f_i(x)\|
\le 8L_1\big(f_i(x)-f_i^*\big)+\frac{L_0}{L_1}
\le 8L_1\big(f(x)-f^*\big)+8L_1\big(f^*-f_i^*\big)+\frac{L_0}{L_1}.
\)
We obtain

\[
\EE\|G_i^{t}\|\le e^{\gamma L_1}\gamma\left(
2L_0+8L_1^2\EE[f(x^{t})-f^*]+8L_1^2(f^*-f_i^*)
\right).
\]

Summing the geometric weights $\sum_{t=0}^{k}(1-\eta)^{k+1-t}\le \eta^{-1}$ yields the three deterministic terms in \eqref{eq:vk_nablafxk_bound}.

By Jensen and Cauchy--Schwarz,

\[
\EE\Big\|\sum_{t=0}^{k}(1-\eta)^{k-t}U_i^{t}\Big\|
\le
\Big(\sum_{t=0}^{k}(1-\eta)^{2(k-t)}\EE\|U_i^{t}\|^2\Big)^{1/2}.
\]

The cross terms vanish since the oracle is unbiased and adapted to the filtration
$\{\mathcal F_t\}$: for $s<t$,

\[
\EE\big[\langle U_i^{s},U_i^{t}\rangle\big]
=\EE\big[\langle U_i^{s},\EE[U_i^{t}\mid\mathcal F_t]\rangle\big]=0.
\]

Using $\EE\|U_i^{t}\|^2\le\sigma^2$ and summing the geometric series gives
$\le \sigma/\sqrt{2\eta-\eta^2}\le \sigma/\sqrt{\eta}$, so the prefactor $\eta$ in
\eqref{eq:Hi_three_terms} produces the term $\sqrt{\eta}\sigma$.
This finishes Part~2.

\medskip
\noindent\textbf{Part 3.}
By the triangle inequality,
\(
\|v_i^{k}-v_j^{k}\|\le \|H_i^{k}\|+\|\nabla f_i(x^k)-\nabla f_j(x^k)\|+\|H_j^{k}\|.
\)
Taking expectations, using \eqref{eq:heterogeneity} and Jensen gives
$\EE\|\nabla f_i(x^k)-\nabla f_j(x^k)\|\le 2\zeta$; applying the bound of Part~2 to
$H_i^{k}$ and $H_j^{k}$ and averaging over $i,j$ replaces $(f^*-f_i^*)$ by
$\Delta^*$ and $\EE\|H_i^0\|$ by $V_0$, yielding \eqref{eq:cross_worker_diff}.
\end{proof}

\begin{lemma}[Average momentum error]\label{lem:avg_momentum_error}
Let $\bar v^{k} \eqdef \tfrac{1}{G}\sum_{i\in\cG} v_i^{k}$ and assume the setting of Lemma~\ref{lem:momentum_properties} with constant $\eta$ and normalized steps of size $\gamma$. Then for $k \ge 1$,
\begin{align}
A_{k} \eqdef \EE\big[\|\bar v^{k} - \nabla f(x^{k-1})\|\big]
&\le (1-\eta)^{k} V_0 + \frac{\sqrt{\eta}\sigma}{\sqrt{G}} + \frac{2L_0\gamma e^{\gamma L_1}}{\eta} \notag\\
&\quad + 8L_1^2\gamma e^{\gamma L_1}\sum_{t=0}^{k-1}(1-\eta)^{k-1-t}\EE[f(x^t)-f^*] \notag\\
 &\quad + \frac{8L_1^2\gamma e^{\gamma L_1}}{\eta}\Delta^*. \label{eq:Ak_bound2}
\end{align}
Here $V_0 \eqdef \tfrac{1}{G}\sum_{i\in\cG}\EE\|v_i^0-\nabla f_i(x^0)\|$ and $\Delta^* \eqdef \tfrac{1}{G}\sum_{i\in\cG}(f^*-f_i^*)$.
\end{lemma}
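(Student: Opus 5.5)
We follow the proof of Part~2 of Lemma~\ref{lem:momentum_properties}, but apply it to the averaged momentum $\bar v^k$ instead of a single worker. The deterministic drift terms are handled by the triangle inequality, worker by worker. The noise term is handled jointly, so that the independence across good workers gives the $1/\sqrt{G}$ factor.

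\textbf{Step 1: recursion for the averaged error.} Set $\bar H^k \eqdef \bar v^k - \nabla f(x^{k-1}) = \frac1G\sum_{i\in\cG} H_i^k$. Averaging the recursion \eqref{eq:Hi_recursion} over $i\in\cG$ gives
\[
\bar H^k = (1-\eta)\bar H^{k-1} + (1-\eta)\bar G^{k-1} + \eta \bar U^{k-1},
\]
where $\bar G^t \eqdef \frac1G\sum_{i} G_i^t$ and $\bar U^t \eqdef \frac1G\sum_i U_i^t$. Unrolling as in Part~2 and applying the triangle inequality yields three terms: an initial term, a drift term and a noise term.

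\textbf{Step 2: initial and drift terms.}
\begin{itemize}
\item Initial term: $\|\bar H^0\|\le \frac1G\sum_i\|H_i^0\|$, so its expectation is at most $V_0$. This produces the $(1-\eta)^k V_0$ term.
\item Drift term: use $\|\bar G^t\|\le \frac1G\sum_i \|G_i^t\|$. Then reuse the per-worker estimate
\[
\EE\|G_i^t\|\le e^{\gamma L_1}\gamma\bigl(2L_0+8L_1^2\EE[f(x^t)-f^*]+8L_1^2(f^*-f_i^*)\bigr)
\]
derived in Part~2 from \eqref{eq:grad_lip} and the gradient-norm bound. Averaging over $i$ turns $(f^*-f_i^*)$ into $\Delta^*$.
\item Weighted sum: the constant parts, summed with geometric weights bounded by $\eta^{-1}$, give $\frac{2L_0\gamma e^{\gamma L_1}}{\eta}$ and $\frac{8L_1^2\gamma e^{\gamma L_1}}{\eta}\Delta^*$. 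The function-value part keeps its explicit weighted sum $8L_1^2\gamma e^{\gamma L_1}\sum_t(1-\eta)^{k-1-t}\EE[f(x^t)-f^*]$, exactly as in \eqref{eq:vk_nablafxk_bound}.
\end{itemize}
The index bookkeeping (the shift between $k$ and $k+1$ in the unrolled sum) should match the convention already used in Part~2, so we copy it.

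\textbf{Step 3: noise term, the only genuinely new step.} Apply Jensen's inequality to get
\[
\eta\,\EE\Bigl\|\sum_t(1-\eta)^{k-t}\bar U^t\Bigr\| \le \eta\Bigl(\EE\Bigl\|\sum_t(1-\eta)^{k-t}\bar U^t\Bigr\|^2\Bigr)^{1/2}.
\]
Expand the square.
\begin{itemize}
\item Cross terms across different times vanish by the martingale argument of Part~2.
\item Cross terms between distinct workers $i\ne j$ at the same time also vanish, since $\EE[\langle U_i^t,U_j^t\rangle\mid\mathcal F_t]=0$.
\end{itemize}
This second point is the main obstacle. It requires that the stochastic oracles of different good workers are conditionally independent given $x^t$. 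Assumption~\ref{as:bounded_variance} does not state this explicitly, so we make it explicit as the standard reading of the model: each worker samples its own $\xi_i^t$ independently. Granting it,
\[
\EE\|\bar U^t\|^2 = \frac{1}{G^2}\sum_i\EE\|U_i^t\|^2 \le \frac{\sigma^2}{G}.
\]
The geometric sum $\sum_t(1-\eta)^{2(k-t)}\le 1/\eta$ then bounds the noise term by $\eta\sqrt{\sigma^2/(G\eta)} = \sqrt{\eta}\sigma/\sqrt{G}$.

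Collecting the terms from Steps~2 and~3 gives \eqref{eq:Ak_bound2}.
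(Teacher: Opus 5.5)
Your proposal follows the paper's proof essentially step by step: the averaged recursion for $\bar H^k$, the worker-wise triangle inequality on the drift, and a joint second-moment bound $\EE\|\bar U^t\|^2\le\sigma^2/G$ with the temporal cross terms removed by the martingale argument. The paper also uses cross-worker independence without stating it, so you are right to make it explicit.

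One step needs repair, and the paper's proof relies on the same step. The per-worker estimate you import from Part~2 rests on $\|\nabla f_i(x)\|\le 8L_1(f(x)-f^*)+8L_1(f^*-f_i^*)+L_0/L_1$. This silently replaces $f_i(x)-f^*$ by $f(x)-f^*$, i.e.\ it assumes $f_i(x)\le f(x)$, and it can fail for a single worker. The valid per-worker bound is $\|\nabla f_i(x)\|\le 8L_1(f_i(x)-f_i^*)+L_0/L_1$. Since you only ever use the average over $\cG$, the fix is short. Apply \eqref{eq:grad_lip} with the gradient norm evaluated at $x^t$, then apply Lemma~\ref{lem:sum_grad_norms} to $\frac1G\sum_{i\in\cG}\|\nabla f_i(x^t)\|$. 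This gives exactly the averaged drift bound you need: $\frac1G\sum_{i\in\cG}\EE\|G_i^t\|\le e^{\gamma L_1}\gamma\bigl(2L_0+8L_1^2\EE[f(x^t)-f^*]+8L_1^2\Delta^*\bigr)$.

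Also, do not copy the unrolled display of Part~2 literally, because it is actually the formula for $H_i^{k+1}$. The correct unrolling is $\bar H^k=(1-\eta)^k\bar H^0+\sum_{t=0}^{k-1}(1-\eta)^{k-t}\bar G^t+\eta\sum_{t=0}^{k-1}(1-\eta)^{k-1-t}\bar U^t$. The paper uses this form in its proof of this lemma, and it is what keeps $\EE[f(x^k)-f^*]$ out of the bound.
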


\begin{proof}
Let $H_i^{k}\eqdef v_i^{k}-\nabla f_i(x^{k-1})$ and define the averages

\[
\bar H^{k}\eqdef \frac1G\sum_{i\in\cG} H_i^{k},\
\bar G^{t}\eqdef \frac1G\sum_{i\in\cG}(\nabla f_i(x^{t-1})-\nabla f_i(x^{t})),\
\bar U^{t}\eqdef \frac1G\sum_{i\in\cG}\big(\nabla f_{\xi_i^{t}}(x^{t})-\nabla f_i(x^{t})\big).
\]

Averaging the per-worker recursion \eqref{eq:Hi_recursion} gives

\[
\bar H^{k}=(1-\eta)\bar H^{k-1}+(1-\eta)\bar G^{k-1}+\eta\bar U^{k-1}.
\]

Unrolling, taking norms, and expectations as in Lemma~\ref{lem:momentum_properties} (Part~2),
\begin{align*}
\EE\|\bar H^{k}\|
&\le (1-\eta)^{k}\EE\|\bar H^{0}\|
+\sum_{t=0}^{k-1}(1-\eta)^{k-t}\EE\|\bar G^{t}\|
+\eta\EE\Big\|\sum_{t=0}^{k-1}(1-\eta)^{k-1-t}\bar U^{t}\Big\|.
\end{align*}
For $\EE\|\bar G^{t}\|$, apply Lemma~\ref{lem:L0L1_extended} (see \eqref{eq:grad_lip}) and the same one-point bound used in Lemma~\ref{lem:momentum_properties} to get

\[
\EE\|\bar G^{t}\|\le e^{\gamma L_1}\gamma\left(2L_0+8L_1^2\EE[f(x^t)-f^*]+8L_1^2\Delta^*\right).
\]

For the noise term, independence across workers and unbiasedness give

\[
\EE\|\bar U^{t}\|^2
=\EE\Big\|\frac1G\sum_{i} U_i^{t}\Big\|^2
=\frac{1}{G^2}\sum_{i}\EE\|U_i^{t}\|^2\le \frac{\sigma^2}{G}.
\]

As in Part~2, cross terms in time vanish by the short argument
\[
\EE\big[\langle \bar U^{s},\bar U^{t}\rangle\big]
=\EE\big[\langle \bar U^{s},\EE[\bar U^{t}\mid\mathcal F_t]\rangle\big]=0\quad (s<t),
\]
so
\[
\EE\Big\|\sum_{t=0}^{k}(1-\eta)^{k-t}\bar U^{t}\Big\|
\le \frac{\sigma}{\sqrt{G}}\Big(\sum_{t=0}^{\infty}(1-\eta)^{2t}\Big)^{1/2}
\le \frac{\sigma}{\sqrt{G\eta}}.
\]
Putting these together and using $\sum_{t=0}^{k}(1-\eta)^{k+1-t}\le \eta^{-1}$ yields

\[
\EE\|\bar H^{k}\|
\le (1-\eta)^{k} V_0
+\frac{\sqrt{\eta}\sigma}{\sqrt{G}}
+\frac{2L_0\gamma e^{\gamma L_1}}{\eta}
+8L_1^2\gamma e^{\gamma L_1}\sum_{t=0}^{k-1}(1-\eta)^{k-1-t}\EE[f(x^t)-f^*]
+\frac{8L_1^2\gamma e^{\gamma L_1}}{\eta}\Delta^*.
\]

Finally, note that $\bar H^{k}=\bar v^{k}-\nabla f(x^{k-1})$, which gives \eqref{eq:Ak_bound2}.
\end{proof}

\begin{lemma}[Descent Lemma for \algname{Byz-NSGDM}]\label{lem:descent}
Let $f(x)=\tfrac1G\sum_{i\in\cG} f_i(x)$ with each $f_i$ $(L_0,L_1)$-smooth, and let the server update

\[
x^{k}=x^{k-1}-\gamma_{k-1}\frac{v^{k}}{\|v^{k}\|}\qquad(\gamma_{k-1}>0).
\]

Then
\begin{align*}
f(x^{k})
&\le f(x^{k-1})-\gamma_{k-1}\|\nabla f(x^{k-1})\|
+2\gamma_{k-1}\|\nabla f(x^{k-1})-v^{k}\|\\
&\quad+\frac{\gamma_{k-1}^2}{2}\exp(\gamma_{k-1} L_1)\left(L_0+\frac{L_1}{G}\sum_{i\in\cG}\|\nabla f_i(x^{k-1})\|\right).
\end{align*}
\end{lemma}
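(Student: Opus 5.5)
We apply the average-function smoothness bound \eqref{eq:avg_smooth} from Lemma~\ref{lem:L0L1_extended} with $x = x^{k-1}$ and $y = x^{k}$. Since the step is normalized, $\|x^k - x^{k-1}\| = \gamma_{k-1}$ exactly (we assume $v^k\neq 0$; otherwise the update is undefined, or we adopt any convention giving a step of norm at most $\gamma_{k-1}$). This immediately produces the quadratic remainder
\[
\frac{\gamma_{k-1}^2}{2}\exp(\gamma_{k-1}L_1)\Big(L_0+\tfrac{L_1}{G}\textstyle\sum_{i\in\cG}\|\nabla f_i(x^{k-1})\|\Big).
\]
It remains to bound the linear term $-\gamma_{k-1}\langle \nabla f(x^{k-1}), v^k/\|v^k\|\rangle$.

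For the linear term, write $g \eqdef \nabla f(x^{k-1})$ and $v \eqdef v^k$. We use the standard normalized-momentum inequality
\[
-\Big\langle g, \frac{v}{\|v\|}\Big\rangle \le -\|g\| + 2\|g - v\|.
\]
To prove it, split $g = v + (g - v)$:
\[
-\langle g, v/\|v\|\rangle = -\|v\| - \langle g-v, v/\|v\|\rangle \le -\|v\| + \|g-v\|
\]
by Cauchy--Schwarz. Then the triangle inequality $\|v\| \ge \|g\| - \|g-v\|$ gives $-\|v\| \le -\|g\| + \|g-v\|$. Summing these yields the claim. Multiplying by $\gamma_{k-1}$ and combining with the remainder gives the stated inequality.

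The only mildly delicate point is that \eqref{eq:avg_smooth} is stated with $\exp(L_1\|x-y\|)$ and the sum of individual gradient norms at the base point $x=x^{k-1}$. This matches the statement exactly once $\|x-y\|=\gamma_{k-1}$ is substituted, so no further work is needed. The proof is therefore a direct two-step combination, and we do not expect any real obstacle.
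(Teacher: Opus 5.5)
Your proposal is correct and follows essentially the same route as the paper: apply the average-smoothness bound \eqref{eq:avg_smooth} with $x=x^{k-1}$, $y=x^k$, $\|y-x\|=\gamma_{k-1}$, then lower-bound $\langle \nabla f(x^{k-1}), v^k/\|v^k\|\rangle$ via Cauchy--Schwarz and the triangle inequality to get $\|\nabla f(x^{k-1})\|-2\|\nabla f(x^{k-1})-v^k\|$. Your remark about $v^k\neq 0$ is a harmless extra caveat that the paper leaves implicit.
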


\begin{proof}
Apply Lemma~\ref{lem:L0L1_extended} (average smoothness, \eqref{eq:avg_smooth}) with
$x=x^{k-1}$ and $y=x^{k}=x^{k-1}-\gamma_{k-1}\tfrac{v^{k}}{\|v^{k}\|}$:
\begin{align}
f(x^{k})
&\le f(x^{k-1})+\inp{\nabla f(x^{k-1})}{-\gamma_{k-1}\tfrac{v^{k}}{\|v^{k}\|}}
+\frac{\gamma_{k-1}^2}{2}\exp(\gamma_{k-1} L_1)
\left(L_0+\frac{L_1}{G}\sum_{i\in\cG}\|\nabla f_i(x^{k-1})\|\right).
\label{eq:descent-start}
\end{align}
It remains to lower bound the linear term $\inp{\nabla f(x^{k-1})}{\tfrac{v^{k}}{\|v^{k}\|}}$.
Write
\begin{align*}
\inp{\nabla f(x^{k-1})}{\tfrac{v^{k}}{\|v^{k}\|}}
&=\inp{\nabla f(x^{k-1})-v^{k}}{\tfrac{v^{k}}{\|v^{k}\|}}
+\inp{v^{k}}{\tfrac{v^{k}}{\|v^{k}\|}}\\
&\ge -\|\nabla f(x^{k-1})-v^{k}\|+\|v^{k}\| \qquad\text{(Cauchy--Schwarz)}\\
&\ge -\|\nabla f(x^{k-1})-v^{k}\|+\|\nabla f(x^{k-1})\|-\|\nabla f(x^{k-1})-v^{k}\| \quad\text{(triangle ineq.)}\\
&=\|\nabla f(x^{k-1})\|-2\|\nabla f(x^{k-1})-v^{k}\|.
\end{align*}
Therefore

\[
-\gamma_{k-1}\inp{\nabla f(x^{k-1})}{\tfrac{v^{k}}{\|v^{k}\|}}
\le -\gamma_{k-1}\|\nabla f(x^{k-1})\|
+2\gamma_{k-1}\|\nabla f(x^{k-1})-v^{k}\|.
\]

Substituting this bound into \eqref{eq:descent-start} gives the claim.
\end{proof}

\begin{lemma}[Descent with curvature bound]\label{lem:descent_extended}
Let $\{x^k\}$ be generated by Algorithm~\ref{alg:Byz-NSGDM} and Assumption
\ref{as:L0L1_smoothness} hold. Then for $k \ge 1$,
\begin{align*}
f(x^{k})
&\le f(x^{k-1}) - \gamma_{k-1} \|\nabla f(x^{k-1})\|
    + 2\gamma_{k-1} \|\nabla f(x^{k-1})-v^{k}\|\\
&\quad + L_0\gamma_{k-1}^2 e^{\gamma_{k-1} L_1}
    + 4L_1^2\gamma_{k-1}^2 e^{\gamma_{k-1} L_1}\bigl(f(x^{k-1})-f^*\bigr)
    + 4L_1^2\gamma_{k-1}^2 e^{\gamma_{k-1} L_1}\Delta^*.
\end{align*}
\end{lemma}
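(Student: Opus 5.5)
This follows directly by combining Lemma~\ref{lem:descent} with Lemma~\ref{lem:sum_grad_norms}. Lemma~\ref{lem:descent} already supplies the first three terms, namely $f(x^{k-1}) - \gamma_{k-1}\|\nabla f(x^{k-1})\| + 2\gamma_{k-1}\|\nabla f(x^{k-1}) - v^k\|$, exactly as they appear in the target. So the only work is to upper bound the curvature remainder
\[
\frac{\gamma_{k-1}^2}{2}e^{\gamma_{k-1}L_1}\Big(L_0 + \frac{L_1}{G}\sum_{i\in\cG}\|\nabla f_i(x^{k-1})\|\Big)
\]
by terms involving only $L_0$, $f(x^{k-1})-f^*$ and $\Delta^*$.

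For this I apply Lemma~\ref{lem:sum_grad_norms} at $x = x^{k-1}$, using $\Delta^* = \frac{1}{G}\sum_{i\in\cG}(f^*-f_i^*)$. Multiplying its bound by $L_1$ gives
\[
\frac{L_1}{G}\sum_{i\in\cG}\|\nabla f_i(x^{k-1})\| \le 8L_1^2\big(f(x^{k-1})-f^*\big) + 8L_1^2\Delta^* + L_0.
\]
Hence the bracket is at most $2L_0 + 8L_1^2(f(x^{k-1})-f^*) + 8L_1^2\Delta^*$.

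Multiplying by $\tfrac{\gamma_{k-1}^2}{2}e^{\gamma_{k-1}L_1}$ produces exactly the three remainder terms in the claim: $L_0\gamma_{k-1}^2 e^{\gamma_{k-1}L_1}$, $4L_1^2\gamma_{k-1}^2e^{\gamma_{k-1}L_1}(f(x^{k-1})-f^*)$, and $4L_1^2\gamma_{k-1}^2e^{\gamma_{k-1}L_1}\Delta^*$.

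There is no real obstacle here. The only point to check is that Lemma~\ref{lem:sum_grad_norms} holds pointwise at any $x$, which its statement guarantees, so the resulting inequality holds deterministically along the trajectory. Both $f(x^{k-1})-f^*\ge 0$ and $\Delta^*$ are finite by Assumption~\ref{as:lower_bound}. Note that $\Delta^*$ may be negative termwise, but only its linear appearance is used, so this does not matter.
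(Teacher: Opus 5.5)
Your proposal is correct and is essentially the paper's proof: apply Lemma~\ref{lem:descent}, then bound the curvature bracket with Lemma~\ref{lem:sum_grad_norms} at $x=x^{k-1}$ to get $2L_0+8L_1^2(f(x^{k-1})-f^*)+8L_1^2\Delta^*$, and multiply by $\tfrac{\gamma_{k-1}^2}{2}e^{\gamma_{k-1}L_1}$. You were also right to invoke Assumption~\ref{as:lower_bound}: the statement lists only Assumption~\ref{as:L0L1_smoothness}, but Lemma~\ref{lem:sum_grad_norms} needs Assumption~\ref{as:lower_bound}, so both your argument and the paper's rely on it.
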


\begin{proof}
Applying Lemma~\ref{lem:descent}, we have
\begin{align*}
f(x^{k})
&\le f(x^{k-1}) - \gamma_{k-1} \|\nabla f(x^{k-1})\|
   + 2\gamma_{k-1} \|\nabla f(x^{k-1})-v^{k}\|\\
   &\quad+ \frac{\gamma_{k-1}^2}{2} e^{\gamma_{k-1} L_1}
     \left( L_0 + \frac{L_1}{G}\sum_{i\in\cG}\|\nabla f_i(x^{k-1})\| \right).
\end{align*}
Using the sum-of-gradient-norms bound \eqref{eq:sum_of_grad_norms_bound} from
Lemma~\ref{lem:sum_grad_norms}, and substituting this into the curvature term yields
\begin{align*}
&\frac{\gamma_{k-1}^2}{2} e^{\gamma_{k-1} L_1}
\left( L_0 + \frac{L_1}{G}\sum_{i\in\cG}\|\nabla f_i(x^{k-1})\| \right)\\
&\quad\le
L_0\gamma_{k-1}^2 e^{\gamma_{k-1} L_1}
+4L_1^2\gamma_{k-1}^2 e^{\gamma_{k-1} L_1}\bigl(f(x^{k-1})-f^*\bigr)\\
&\qquad+4L_1^2\gamma_{k-1}^2 e^{\gamma_{k-1} L_1}\Delta^*.
\end{align*}
Combining gives the claim.
\end{proof}

\begin{lemma}[Byzantine Aggregation Error Decomposition]\label{lem:byz_agg_decomp_new}
Let $\bar{v}^{k} = \frac{1}{G}\sum_{i\in\mathcal{G}} v_i^{k}$ be the average momentum of good workers and $v^{k} = \texttt{ARAgg}(v_1^{k}, \ldots, v_n^{k})$ be the output of a $(\delta, \kappa)$-robust aggregator. Then for $k \ge 1$:
\begin{align*}
    \mathbb{E}[\|v^{k} - \nabla f(x^{k-1})\|] &\leq \mathbb{E}[\|\bar{v}^{k} - \nabla f(x^{k-1})\|] + \kappa \mathbb{E}_{i,j\in\mathcal{G}}[\|v_i^{k} - v_j^{k}\|], 
\end{align*}
where $\mathbb{E}_{i,j\in\mathcal{G}}[\|v_i^{k} - v_j^{k}\|] = \frac{1}{G^2}\sum_{i,j\in\mathcal{G}}\mathbb{E}[\|v_i^{k} - v_j^{k}\|]$.
\end{lemma}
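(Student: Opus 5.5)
The plan is to apply the triangle inequality pointwise for each realization and then take expectations. For each $k\ge1$ and every realization of the stochastic gradients and of the Byzantine vectors,
\[
\|v^{k}-\nabla f(x^{k-1})\|\le \|\bar v^{k}-\nabla f(x^{k-1})\|+\|v^{k}-\bar v^{k}\|.
\]
The first term already has the desired form. For the second term we use Definition~\ref{def:robust_agg} applied to the inputs $v_1^k,\ldots,v_n^k$. The definition places no restriction on the Byzantine vectors, so it holds pathwise. It gives
\[
\|v^{k}-\bar v^{k}\|\le \frac{\kappa}{G}\sum_{i\in\cG}\|v_i^{k}-\bar v^{k}\|.
\]

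Next we convert deviations from the mean into pairwise differences. For each $i\in\cG$ we write $v_i^k-\bar v^k=\frac1G\sum_{j\in\cG}(v_i^k-v_j^k)$. By the triangle inequality (equivalently, Jensen for the norm), $\|v_i^k-\bar v^k\|\le \frac1G\sum_{j\in\cG}\|v_i^k-v_j^k\|$. Averaging over $i$ then yields
\[
\frac1G\sum_{i\in\cG}\|v_i^k-\bar v^k\|\le \frac{1}{G^2}\sum_{i,j\in\cG}\|v_i^k-v_j^k\|.
\]

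Taking full expectations and using linearity gives exactly the stated bound, with $\EE_{i,j\in\cG}[\|v_i^k-v_j^k\|]$ interpreted as $\frac{1}{G^2}\sum_{i,j}\EE\|v_i^k-v_j^k\|$.

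There is no real obstacle here. The only point that needs care is that the robustness inequality must be invoked deterministically, before taking the expectation, because the Byzantine inputs may depend adaptively on the honest vectors. Since Definition~\ref{def:robust_agg} quantifies over all input vectors, this is legitimate. Note also that the diagonal terms $i=j$ vanish, so the bound with the $1/G^2$ normalization (rather than $1/(G(G-1))$) is valid, if slightly loose.
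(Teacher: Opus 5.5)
Your proposal is correct and follows the paper's proof step for step: a pathwise triangle inequality, the $(\delta,\kappa)$-robustness bound applied before taking expectations, the estimate $\|v_i^k-\bar v^k\|\le \frac{1}{G}\sum_{j\in\cG}\|v_i^k-v_j^k\|$, and linearity of expectation. One small correction to your closing aside: because the diagonal terms vanish, $\frac{1}{G^2}\sum_{i,j\in\cG}\|v_i^k-v_j^k\| = \frac{G-1}{G}\cdot\frac{1}{G(G-1)}\sum_{i\neq j}\|v_i^k-v_j^k\|$, so the $1/G^2$ normalization gives a slightly smaller (tighter) quantity than the distinct-pair average, not a looser one; this does not affect validity.
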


\begin{proof}
Using the triangle inequality:
\begin{equation*}
    \|v^{k} - \nabla f(x^{k-1})\| \leq \|v^{k} - \bar{v}^{k}\| + \|\bar{v}^{k} - \nabla f(x^{k-1})\|.
\end{equation*}

Applying the $(\delta, \kappa)$-robust aggregator property:
\begin{equation*}
    \|v^{k} - \bar{v}^{k}\| \leq \frac{\kappa}{G} \sum_{i \in \mathcal{G}} \|v_i^{k} - \bar{v}^{k}\|.
\end{equation*}

For each $i \in \mathcal{G}$:
\begin{align*}
    \|v_i^{k} - \bar{v}^{k}\| &= \left\|v_i^{k} - \frac{1}{G}\sum_{j \in \mathcal{G}} v_j^{k}\right\| = \left\|\frac{1}{G}\sum_{j \in \mathcal{G}} (v_i^{k} - v_j^{k})\right\| \leq \frac{1}{G}\sum_{j \in \mathcal{G}} \|v_i^{k} - v_j^{k}\|.
\end{align*}

Therefore:
\begin{align*}
    \|v^{k} - \bar{v}^{k}\| &\leq \frac{\kappa}{G} \sum_{i \in \mathcal{G}} \frac{1}{G}\sum_{j \in \mathcal{G}} \|v_i^{k} - v_j^{k}\| = \frac{\kappa}{G^2} \sum_{i,j \in \mathcal{G}} \|v_i^{k} - v_j^{k}\|.
\end{align*}

Taking expectation and combining with the triangle inequality:
\begin{align*}
    \mathbb{E}[\|v^{k} - \nabla f(x^{k-1})\|] &\leq \mathbb{E}[\|v^{k} - \bar{v}^{k}\|] + \mathbb{E}[\|\bar{v}^{k} - \nabla f(x^{k-1})\|] \\
    &\leq \frac{\kappa}{G^2} \sum_{i,j \in \mathcal{G}} \mathbb{E}[\|v_i^{k} - v_j^{k}\|] + \mathbb{E}[\|\bar{v}^{k} - \nabla f(x^{k-1})\|] \\
    &= \kappa \mathbb{E}_{i,j\in\mathcal{G}}[\|v_i^{k} - v_j^{k}\|] + \mathbb{E}[\|\bar{v}^{k} - \nabla f(x^{k-1})\|].
\end{align*}
\end{proof}

\section{Proof of Theorem \ref{thm:byz_nsgdm_convergence}}

\begin{proof}
We begin by establishing the descent inequality and then perform a weighted telescoping analysis. Define the following quantities for notational convenience:
\begin{align*}
&\Delta_k = \mathbb{E}[f(x^k) - f^*], \quad A_{k} = \mathbb{E}[\|\bar{v}^{k} - \nabla f(x^{k-1})\|],\\
&B_{k} = \mathbb{E}[\|v^{k} - \nabla f(x^{k-1})\|], \quad D_{k} = \mathbb{E}_{i,j\in\mathcal{G}}[\|v_i^{k} - v_j^{k}\|].
\end{align*}

Since $\gamma \leq \frac{1}{2L_1}$ from our choice in \eqref{eq:gamma0_choice}, we have $\exp(L_1\gamma) \leq \exp(1/2) \leq 2$. Applying Lemma~\ref{lem:descent_extended}, we obtain the one-step descent inequality:
\begin{equation}\label{eq:descent_step}
\Delta_{k} \leq (1 + 8L_1^2\gamma^2)\Delta_{k-1} - \gamma\mathbb{E}[\|\nabla f(x^{k-1})\|] + 2\gamma B_{k} + 2L_0\gamma^2 + 8L_1^2\gamma^2\Delta^*.
\end{equation}

We now bound $B_{k}$ using Lemma~\ref{lem:byz_agg_decomp_new}: $B_{k} \leq A_{k} + \kappa D_{k}$. Therefore:
\begin{equation*}
2\gamma B_{k} \leq 2\gamma A_{k} + 2\gamma\kappa D_{k}.
\end{equation*}

Next, we bound $A_{k}$ and $D_{k}$ using Lemma~\ref{lem:avg_momentum_error} and equation \eqref{eq:cross_worker_diff}. Since $\exp(L_1\gamma) \leq 2$, we have:
\begin{align*}
A_{k} &\leq (1-\eta)^{k} V_0 + \frac{\sqrt{\eta}\sigma}{\sqrt{G}} + \frac{4L_0\gamma}{\eta} + 16L_1^2\gamma\sum_{t=0}^{k-1}(1-\eta)^{k-1-t}\Delta_t + \frac{16L_1^2\gamma}{\eta}\Delta^*, \\
D_{k} &\leq 2\zeta + 2(1-\eta)^{k} V_0 + 2\sqrt{\eta}\sigma + \frac{8L_0\gamma}{\eta} + 32L_1^2\gamma\sum_{t=0}^{k-1}(1-\eta)^{k-1-t}\Delta_t + \frac{32L_1^2\gamma}{\eta}\Delta^*. 
\end{align*}

Substituting these bounds into \eqref{eq:descent_step} and collecting terms yields:
\begin{align}
\Delta_{k} &\leq (1 + 8L_1^2\gamma^2)\Delta_{k-1} - \gamma\mathbb{E}[\|\nabla f(x^{k-1})\|] + 2\gamma A_{k} + 2\gamma\kappa D_{k} + 2L_0\gamma^2 + 8L_1^2\gamma^2\Delta^* \notag \\
&\leq (1 + 8L_1^2\gamma^2)\Delta_{k-1} - \gamma\mathbb{E}[\|\nabla f(x^{k-1})\|] + 32L_1^2\gamma^2(1 + 2\kappa)\sum_{t=0}^{k-1}(1-\eta)^{k-1-t}\Delta_t \notag \\
&\quad + 2\gamma(1 + 2\kappa)(1-\eta)^{k} V_0 + 4\gamma\kappa\zeta + \frac{2\gamma\sqrt{\eta}\sigma}{\sqrt{G}} + 4\gamma\kappa\sqrt{\eta}\sigma \notag \\
&\quad + 2L_0\gamma^2 + \frac{8(1 + 2\kappa)L_0\gamma^2}{\eta} + 8L_1^2\gamma^2\Delta^* + \frac{32(1 + 2\kappa)L_1^2\gamma^2}{\eta}\Delta^*. \label{eq:main_recursion}
\end{align}

To analyze the convergence over $K$ iterations, we introduce a weighted telescoping argument. Define the weights:
\begin{equation*}
w_k = \left(1 + 8L_1^2\gamma^2 + \frac{32(1 + 2\kappa)L_1^2\gamma^2}{\eta}\right)^{-k}, \quad S_K = \sum_{k=1}^{K}w_k.
\end{equation*}

Note that $w_k$ decreases geometrically with $k$. The key observation is that:
\begin{equation*}
w_{k-1} = w_k \cdot \left(1 + 8L_1^2\gamma^2 + \frac{32(1 + 2\kappa)L_1^2\gamma^2}{\eta}\right).
\end{equation*}

Now we multiply \eqref{eq:main_recursion} by $w_k$ and sum from $k = 1$ to $K$. Let's handle each term carefully. For the first term on the right-hand side:
\begin{align*}
\sum_{k=1}^{K}w_k(1 + 8L_1^2\gamma^2)\Delta_{k-1} &= \sum_{k=1}^{K}w_k\Delta_{k-1} + 8L_1^2\gamma^2\sum_{k=1}^{K}w_k\Delta_{k-1}.
\end{align*}

For the convolution term, we need to bound $\sum_{k=1}^{K}w_k\sum_{t=0}^{k-1}(1-\eta)^{k-1-t}\Delta_t$. We can rewrite this double sum by changing the order of summation. For a fixed $t$, the inner sum runs over all $k$ such that $t \leq k-1$, which means $k \geq t+1$. Therefore:
\begin{align*}
\sum_{k=1}^{K}w_k\sum_{t=0}^{k-1}(1-\eta)^{k-1-t}\Delta_t &= \sum_{t=0}^{K-1}\Delta_t \sum_{k=t+1}^{K}w_k(1-\eta)^{k-1-t}.
\end{align*}

Since $w_k$ is decreasing in $k$ (i.e., $w_{k+1} \leq w_k$), we can bound:
\begin{align*}
\sum_{k=t+1}^{K}w_k(1-\eta)^{k-1-t} &\leq w_{t+1} \sum_{k=t+1}^{K}(1-\eta)^{k-1-t} = w_{t+1} \sum_{j=0}^{K-t-1}(1-\eta)^{j} \\
&\leq w_{t+1} \sum_{j=0}^{\infty}(1-\eta)^{j} = \frac{w_{t+1}}{\eta}.
\end{align*}

Therefore:
\begin{equation*}
\sum_{k=1}^{K}w_k\sum_{t=0}^{k-1}(1-\eta)^{k-1-t}\Delta_t \leq \frac{1}{\eta}\sum_{t=0}^{K-1}w_{t+1}\Delta_t \leq \frac{1}{\eta}\sum_{k=1}^{K}w_k\Delta_{k-1}.
\end{equation*}

Now, summing \eqref{eq:main_recursion} multiplied by $w_k$ from $k = 1$ to $K$:
\begin{align*}
\sum_{k=1}^{K}w_k\Delta_{k} &\leq \sum_{k=1}^{K}w_k(1 + 8L_1^2\gamma^2)\Delta_{k-1} + \frac{32L_1^2\gamma^2(1 + 2\kappa)}{\eta}\sum_{k=1}^{K}w_k\Delta_{k-1} \\
&\quad - \gamma\sum_{k=1}^{K}w_k\mathbb{E}[\|\nabla f(x^{k-1})\|] + \text{(other terms)}.
\end{align*}

Combining the terms with $\Delta_{k-1}$:
\begin{align*}
\sum_{k=1}^{K}w_k\Delta_{k} &\leq \sum_{k=1}^{K}w_k\left(1 + 8L_1^2\gamma^2 + \frac{32(1 + 2\kappa)L_1^2\gamma^2}{\eta}\right)\Delta_{k-1} \\
&\quad - \gamma\sum_{k=1}^{K}w_k\mathbb{E}[\|\nabla f(x^{k-1})\|] + \text{(other terms)}.
\end{align*}

Using the relation $w_{k-1} = w_k(1 + 8L_1^2\gamma^2 + \frac{32(1 + 2\kappa)L_1^2\gamma^2}{\eta})$, we have:
% \begin{align}
% \sum_{k=1}^{K}w_k\left(1 + 8L_1^2\gamma^2 + \frac{32(1 + 2\kappa)L_1^2\gamma^2}{\eta}\right)\Delta_{k-1} = \sum_{k=1}^{K}w_{k-1}\Delta_{k-1} = \sum_{k=0}^{K-1}w_{k}\Delta_{k},
% \end{align}
% where we define $w_{0} = 1$ by convention.

% Now we apply the telescoping property. The left-hand side can be reindexed:
% \begin{align}
% \sum_{k=0}^{K}w_k\Delta_{k+1} = \sum_{k=1}^{K+1}w_{k-1}\Delta_k = \sum_{k=1}^{K}w_{k-1}\Delta_k + w_K\Delta_{K+1}.
% \end{align}

% Therefore:
% \begin{align}
% \sum_{k=1}^{K}w_{k-1}\Delta_k + w_K\Delta_{K+1} &\leq \sum_{k=0}^{K}w_{k-1}\Delta_k - \gamma\sum_{k=0}^{K}w_k\mathbb{E}[\|\nabla f(x^k)\|] + \text{(other terms)}.
% \end{align}

% This simplifies to:
\begin{align*}
\sum_{k=1}^{K}w_k\Delta_{k} &\leq \sum_{k=1}^{K}w_{k-1}\Delta_{k-1}  - \gamma\sum_{k=1}^{K}w_k\mathbb{E}[\|\nabla f(x^{k-1})\|] + \text{(other terms)},
\end{align*}
implying
\begin{align*}
\gamma\sum_{k=0}^{K}w_k\mathbb{E}[\|\nabla f(x^k)\|] &\leq \Delta_0 + \text{(other terms)},
\end{align*}
since $w_{0} = 1$ and both $w_K \geq 0$ and $\Delta_{K+1} \geq 0$ (by Assumption~\ref{as:lower_bound}).
% \begin{align}
% \Delta_0 &\geq \gamma\sum_{k=0}^{K}w_k\mathbb{E}[\|\nabla f(x^k)\|] - \text{(other terms)}.
% \end{align}

Let's now collect all the "other terms" from the weighted sum of \eqref{eq:main_recursion}. For the terms involving $(1-\eta)^k$:
\begin{align*}
\sum_{k=0}^{K}w_k \cdot 2\gamma(1 + 2\kappa)(1-\eta)^k V_0 &\leq 2\gamma(1 + 2\kappa)V_0 \sum_{k=0}^{K}w_k(1-\eta)^k \\
&\leq 2\gamma(1 + 2\kappa)V_0 \cdot w_0 \sum_{k=0}^{\infty}(1-\eta)^k = \frac{2\gamma(1 + 2\kappa)V_0}{\eta}.
\end{align*}

The remaining terms are constants multiplied by $\sum_{k=0}^{K}w_k = S_K$. Collecting everything:
\begin{align*}
\gamma\sum_{k=0}^{K}w_k\mathbb{E}[\|\nabla f(x^k)\|] &\leq \Delta_0 + \frac{2\gamma(1 + 2\kappa)V_0}{\eta} + 4\gamma\kappa\zeta S_K + \frac{2\gamma\sqrt{\eta}\sigma}{\sqrt{G}} S_K + 4\gamma\kappa\sqrt{\eta}\sigma S_K \\
&\quad + \left(2L_0\gamma^2 + \frac{8(1 + 2\kappa)L_0\gamma^2}{\eta}\right)S_K \\
&\quad + \left(8L_1^2\gamma^2 + \frac{32(1 + 2\kappa)L_1^2\gamma^2}{\eta}\right)\Delta^* S_K.
\end{align*}

Since $\sum_{k=0}^{K}w_k\mathbb{E}[\|\nabla f(x^k)\|] \geq S_K \cdot \min_{0 \leq k \leq K}\mathbb{E}[\|\nabla f(x^k)\|]$, we obtain:
\begin{align}\label{eq:key_inequality}
\gamma S_K \min_{0 \leq k \leq K}\mathbb{E}[\|\nabla f(x^k)\|] &\leq \Delta_0 + \frac{2\gamma(1 + 2\kappa)V_0}{\eta} + 4\gamma\kappa\zeta S_K \notag \\
&\quad + S_K\left(2L_0\gamma^2 + \frac{8(1 + 2\kappa)L_0\gamma^2}{\eta}\right) \notag \\
&\quad + S_K\left(8L_1^2\gamma^2 + \frac{32(1 + 2\kappa)L_1^2\gamma^2}{\eta}\right)\Delta^* \notag \\
&\quad + \frac{2\gamma\sqrt{\eta}\sigma S_K}{\sqrt{G}} + 4\gamma\kappa\sqrt{\eta}\sigma S_K.
\end{align}

To complete the proof, we need a lower bound on $S_K$. Using the inequality $(1+x)^n \leq e^{nx}$ for $x \geq 0$:
\begin{align*}
S_K = \sum_{k=0}^{K}w_k &= \sum_{k=0}^{K}\left(1 + 8L_1^2\gamma^2 + \frac{32(1 + 2\kappa)L_1^2\gamma^2}{\eta}\right)^{-(k+1)} \\
&\geq \frac{K+1}{\left(1 + 8L_1^2\gamma^2 + \frac{32(1 + 2\kappa)L_1^2\gamma^2}{\eta}\right)^{K+1}} \\
&\geq \frac{K+1}{\exp\left(\left(8L_1^2\gamma^2 + \frac{32(1 + 2\kappa)L_1^2\gamma^2}{\eta}\right)(K+1)\right)}.
\end{align*}

Substituting $\gamma = \gamma_0/(K+1)^{3/4}$ and $\eta = (K+1)^{-1/2}$:
\begin{align*}
8L_1^2\gamma^2 + \frac{32(1 + 2\kappa)L_1^2\gamma^2}{\eta} &= \frac{8L_1^2\gamma_0^2}{(K+1)^{3/2}} + \frac{32(1 + 2\kappa)L_1^2\gamma_0^2}{(K+1)^{3/2}} \cdot (K+1)^{1/2} \\
&= \frac{8L_1^2\gamma_0^2}{(K+1)^{3/2}} + \frac{32(1 + 2\kappa)L_1^2\gamma_0^2}{K+1}.
\end{align*}

Therefore:
\begin{align*}
S_K &\geq \frac{K+1}{\exp\left(\frac{8L_1^2\gamma_0^2}{(K+1)^{1/2}} + 32(1 + 2\kappa)L_1^2\gamma_0^2\right)}.
\end{align*}

The choice \eqref{eq:gamma0_choice} ensures that $32(1 + 2\kappa)L_1^2\gamma_0^2 \leq 1/4$ and $\frac{8L_1^2\gamma_0^2}{(K+1)^{1/2}} \leq 1/4$ for all $K \geq 0$. Hence:
\begin{equation*}
S_K \geq (K+1)e^{-1/2}.
\end{equation*}

Finally, dividing \eqref{eq:key_inequality} by $\gamma S_K$ and substituting the parameter choices:
\begin{align*}
\min_{0 \leq k \leq K}\mathbb{E}[\|\nabla f(x^k)\|] &\leq \frac{\Delta_0}{\gamma S_K} + \frac{2(1 + 2\kappa)V_0}{\eta S_K} + 4\kappa\zeta \\
&\quad + 2L_0\gamma + \frac{8(1 + 2\kappa)L_0\gamma}{\eta} + 8L_1^2\gamma\Delta^* + \frac{32(1 + 2\kappa)L_1^2\gamma\Delta^*}{\eta} \\
&\quad + \frac{2\sqrt{\eta}\sigma}{\sqrt{G}} + 4\kappa\sqrt{\eta}\sigma.
\end{align*}

Substituting $\gamma = \gamma_0/(K+1)^{3/4}$, $\eta = (K+1)^{-1/2}$, $S_K \geq (K+1)e^{-1/2}$, and $\Delta_0 = f(x^0) - f^*$:
\begin{align*}
\min_{0 \leq k \leq K}\mathbb{E}[\|\nabla f(x^k)\|] &\leq \frac{e^{1/2}(f(x^0) - f^*)(K+1)^{3/4}}{\gamma_0(K+1)} + \frac{2e^{1/2}(1 + 2\kappa)V_0(K+1)^{1/2}}{K+1} + 4\kappa\zeta \\
&\quad + \frac{2L_0\gamma_0}{(K+1)^{3/4}} + \frac{8(1 + 2\kappa)L_0\gamma_0(K+1)^{1/2}}{(K+1)^{3/4}} \\
&\quad + \frac{8L_1^2\Delta^*\gamma_0}{(K+1)^{3/4}} + \frac{32(1 + 2\kappa)L_1^2\Delta^*\gamma_0(K+1)^{1/2}}{(K+1)^{3/4}} \\
&\quad + \frac{2\sigma}{\sqrt{G}(K+1)^{1/4}} + \frac{4\kappa\sigma}{(K+1)^{1/4}}.
\end{align*}

Simplifying the powers of $(K+1)$ yields:
\begin{align*}
\min_{0 \leq k \leq K}\mathbb{E}[\|\nabla f(x^k)\|] &\leq \frac{e^{1/2}(f(x^0) - f^*)}{\gamma_0(K+1)^{1/4}} + \frac{2e^{1/2}(1 + 2\kappa)V_0}{(K+1)^{1/2}} + 4\kappa\zeta \\
&\quad + \frac{2L_0\gamma_0}{(K+1)^{3/4}} + \frac{8(1 + 2\kappa)L_0\gamma_0}{(K+1)^{1/4}} \\
&\quad + \frac{8L_1^2\Delta^*\gamma_0}{(K+1)^{3/4}} + \frac{32(1 + 2\kappa)L_1^2\Delta^*\gamma_0}{(K+1)^{1/4}} \\
&\quad + \frac{2\sigma}{\sqrt{G}(K+1)^{1/4}} + \frac{4\kappa\sigma}{(K+1)^{1/4}}.
\end{align*}
\end{proof}

\section{Byzantine-Robust Aggregators under New Robustness Definition}\label{appendix:byz-robust-aggregators}

This section provides proofs that several aggregation rules from \cite{allouah2023} satisfy our modified $(\delta, \kappa)$-robustness definition (Definition~\ref{def:robust_agg}), which removes the squares compared to the original definition in \cite{allouah2023}. We prove robustness for Geometric Median (GM) and Coordinate-wise Median (CWMed). The proofs follow a similar structure to those in \cite{allouah2023} but are adapted for the linear (non-squared) robustness condition.

\subsection{Nearest Neighbor Mixing (NNM) and Robustness}

\paragraph{NNM operator.}
Given $x_1,\ldots,x_n\in\R^d$ and an integer $B<n/2$, let $\cG$ be the set of good indices with $|\cG|=G=n-B$.
For a pivot $\mu\in\R^d$, denote by $\mathcal N_\mu$ the indices of its $G$ nearest neighbors among $\{1,\ldots,n\}$ and set

\[
y_\mu \eqdef \frac{1}{G}\sum_{j\in\mathcal N_\mu} x_j.
\]

For each $i\in[n]$, define $y_i\eqdef y_{x_i}$. For the good set $\cG$ let

\[
\bar x_{\cG}\eqdef\frac{1}{G}\sum_{j\in\cG}x_j,\qquad
\bar y_{\cG}\eqdef\frac{1}{G}\sum_{i\in\cG}y_i.
\]

\begin{assumption}[Bounded leverage of the good cluster]\label{as:BL}
There exists a constant $C>1$ such that

\[
\max_{j\in\cG}\|x_j-\bar x_{\cG}\|
\le
\frac{C}{G}\sum_{t\in\cG}\|x_t-\bar x_{\cG}\|.
\]

\end{assumption}

\begin{remark}
Assumption~\ref{as:BL} always holds with $C = G$ since $\max_{j\in\cG}\|x_j-\bar x_{\cG}\| \le \sum_{t\in\cG}\|x_t-\bar x_{\cG}\|$. However, such a choice leads to a large robustness coefficient $\kappa'$ in the subsequent analysis. In practice, when the good workers are well-clustered, $C$ can be much smaller than $G$.
\end{remark}

For a good pivot $i\in\cG$ we write
\[
\alpha_i \eqdef \frac{|\mathcal N_{x_i}\setminus\cG|}{G}=\frac{|\cG\setminus\mathcal N_{x_i}|}{G}\in[0,1/2).
\]
Since at most $B$ Byzantine points exist overall, we always have
\begin{equation*}
\alpha_i \le \frac{B}{G}=\frac{B}{n-B}.
\end{equation*}

\begin{lemma}[NNM Pairing Property]\label{lem:nnm-pairing}
For any pivot $\mu\in\R^d$, let $\mathcal N_\mu$ be the set of indices of the $G$ nearest neighbors to $\mu$. Then
\begin{equation}\label{eq:sum-pairing}
\sum_{b\in\mathcal N_\mu\setminus\cG}\|x_b-\mu\| \le \sum_{g\in\cG\setminus\mathcal N_\mu}\|x_g-\mu\|.
\end{equation}
\end{lemma}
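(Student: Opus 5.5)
The plan is to prove Lemma~\ref{lem:nnm-pairing} by a pure counting and exchange argument. It uses only the definition of $\mathcal N_\mu$ as a set of $G$ indices with the smallest distances to $\mu$.

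\textbf{Step 1: the two index sets have equal size.} Since $|\mathcal N_\mu| = G = |\cG|$, removing the common part $\mathcal N_\mu\cap\cG$ from both sets gives
\[
|\mathcal N_\mu\setminus\cG| = G - |\mathcal N_\mu\cap\cG| = |\cG\setminus\mathcal N_\mu| \eqdef m.
\]
If $m=0$, both sums in \eqref{eq:sum-pairing} are empty and there is nothing to prove. Otherwise we fix any bijection $\pi:\mathcal N_\mu\setminus\cG\to\cG\setminus\mathcal N_\mu$.

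\textbf{Step 2: the nearest-neighbor ordering property.} Every index inside $\mathcal N_\mu$ is at least as close to $\mu$ as every index outside it. That is, for all $b\in\mathcal N_\mu$ and $g\notin\mathcal N_\mu$ we have $\|x_b-\mu\|\le\|x_g-\mu\|$. This is immediate from the definition: if some outside $g$ were strictly closer than some inside $b$, then swapping $b$ for $g$ would strictly decrease the total distance. That contradicts $\mathcal N_\mu$ consisting of the $G$ nearest points. Any ties are broken arbitrarily, and the non-strict inequality still holds.

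\textbf{Step 3: pair off and sum.} Each $b\in\mathcal N_\mu\setminus\cG$ lies in $\mathcal N_\mu$, and its partner $\pi(b)\in\cG\setminus\mathcal N_\mu$ lies outside $\mathcal N_\mu$. Step 2 therefore gives $\|x_b-\mu\|\le\|x_{\pi(b)}-\mu\|$. Summing over $b$ and using that $\pi$ is a bijection yields \eqref{eq:sum-pairing}.

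The only point needing care is Step 2 in the presence of ties, which is handled by the fixed tie-breaking rule. We will also note that the same pairing gives the stronger pointwise statement, in case later NNM robustness bounds need it.
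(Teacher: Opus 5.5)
Your proof is correct and follows essentially the same route as the paper: equal cardinality of $\mathcal N_\mu\setminus\cG$ and $\cG\setminus\mathcal N_\mu$, a bijective pairing, the fact that every point in $\mathcal N_\mu$ is at least as close to $\mu$ as every point outside it, and summation. Your explicit treatment of the empty case $m=0$ and of ties adds a little rigor that the paper's proof leaves implicit.
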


\begin{proof}
Since $|\mathcal N_\mu|=|\cG|=G$, we have $|\mathcal N_\mu\setminus\cG|=|\cG\setminus\mathcal N_\mu|$. Let $m=|\mathcal N_\mu\setminus\cG|$.

Order the points in $\mathcal N_\mu\setminus\cG$ as $b_1,\ldots,b_m$ and points in $\cG\setminus\mathcal N_\mu$ as $g_1,\ldots,g_m$. By the definition of NNM, every point $x_{b_j}$ in $\mathcal N_\mu$ is at least as close to $\mu$ as every point $x_{g_j}$ not in $\mathcal N_\mu$. Therefore, for each $j\in[m]$:

\[
\|x_{b_j}-\mu\| \le \|x_{g_j}-\mu\|.
\]

Summing over all $j$ yields \eqref{eq:sum-pairing}.
\end{proof}

\begin{lemma}[General Pivot Bound]\label{lem:general-pivot}
For any pivot $\mu\in\R^d$,
\[
\|y_\mu-\bar x_{\cG}\| \le \frac{2}{G}\sum_{j\in\cG\setminus\mathcal N_\mu}\|x_j-\mu\|.
\]
\end{lemma}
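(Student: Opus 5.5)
We compare $y_\mu$ and $\bar x_{\cG}$ by matching the index sets they average over. Since $|\mathcal N_\mu| = |\cG| = G$, the two sets differ by exchanging the Byzantine indices $\mathcal N_\mu\setminus\cG$ for the good indices $\cG\setminus\mathcal N_\mu$; the common part $\mathcal N_\mu\cap\cG$ cancels. This gives the exact identity
\[
y_\mu-\bar x_{\cG}=\frac{1}{G}\Big(\sum_{b\in\mathcal N_\mu\setminus\cG}x_b-\sum_{g\in\cG\setminus\mathcal N_\mu}x_g\Big).
\]

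Both sums have the same cardinality $m=|\mathcal N_\mu\setminus\cG|=|\cG\setminus\mathcal N_\mu|$. We may therefore subtract $m\mu$ from each without changing the difference:
\[
y_\mu-\bar x_{\cG}=\frac{1}{G}\Big(\sum_{b\in\mathcal N_\mu\setminus\cG}(x_b-\mu)-\sum_{g\in\cG\setminus\mathcal N_\mu}(x_g-\mu)\Big).
\]
Recentering at the pivot is the key move, because the pairing property is stated in terms of distances to $\mu$.

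Applying the triangle inequality gives
\[
\|y_\mu-\bar x_{\cG}\|\le \frac1G\sum_{b\in\mathcal N_\mu\setminus\cG}\|x_b-\mu\|+\frac1G\sum_{g\in\cG\setminus\mathcal N_\mu}\|x_g-\mu\|.
\]
Lemma~\ref{lem:nnm-pairing} bounds the first sum by the second, so the right-hand side is at most $\frac{2}{G}\sum_{j\in\cG\setminus\mathcal N_\mu}\|x_j-\mu\|$, which is the claim.

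No step here is a real obstacle. The only care needed is the cardinality bookkeeping: the subtraction of $m\mu$ is legitimate precisely because $|\mathcal N_\mu|=G$. If there are ties in the nearest-neighbor selection, any tie-breaking rule works, since Lemma~\ref{lem:nnm-pairing} holds for whichever set $\mathcal N_\mu$ is chosen.
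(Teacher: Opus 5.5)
Your proposal is correct and follows the paper's proof essentially step for step: the paper also cancels the common indices $\mathcal N_\mu\cap\cG$, recenters both sums at $\mu$ using $|\mathcal N_\mu\setminus\cG|=|\cG\setminus\mathcal N_\mu|$, applies the triangle inequality, and finishes with Lemma~\ref{lem:nnm-pairing}. Your comment on tie-breaking is a correct small addition that the paper leaves implicit.
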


\begin{proof}
Write

\[
y_\mu-\bar x_{\cG}
= \frac{1}{G}\left(\sum_{j\in\mathcal N_{\mu}}x_j - \sum_{j\in\cG}x_{j}\right) = \frac{1}{G}\left(\sum_{j\in\mathcal N_{\mu}\setminus\cG}x_j - \sum_{j\in\cG\setminus\mathcal N_{\mu}}x_{j}\right).
\]

Since $|\mathcal N_\mu\setminus\cG|=|\cG\setminus\mathcal N_\mu|$, we can subtract and add $\mu$ to obtain

\[
y_\mu-\bar x_{\cG} = \frac{1}{G}\left(\sum_{j\in\mathcal N_{\mu}\setminus\cG}(x_j-\mu) - \sum_{j\in\cG\setminus\mathcal N_{\mu}}(x_j-\mu)\right).
\]

Taking norms and applying the triangle inequality:

\[
\|y_\mu-\bar x_{\cG}\| \le \frac{1}{G}\left(\sum_{j\in\mathcal N_{\mu}\setminus\cG}\|x_j-\mu\| + \sum_{j\in\cG\setminus\mathcal N_{\mu}}\|x_j-\mu\|\right).
\]

Applying Lemma~\ref{lem:nnm-pairing}:
\[
\|y_\mu-\bar x_{\cG}\| \le \frac{2}{G}\sum_{j\in\cG\setminus\mathcal N_{\mu}}\|x_j-\mu\|.
\]
\end{proof}

\begin{lemma}\label{lem:avg-good-pivot}
Under Assumption~\ref{as:BL}, for any good pivot $i\in\cG$, letting $\alpha\eqdef\max_{i\in\cG}\alpha_i\le B/(n-B)$,

\[
\frac{1}{G}\sum_{i\in\cG}\|y_i-\bar x_{\cG}\| \le \frac{4\alpha C}{G}\sum_{t\in\cG}\|x_t-\bar x_{\cG}\|.
\]

\end{lemma}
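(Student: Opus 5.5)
The plan is to apply Lemma~\ref{lem:general-pivot} with the good pivot $\mu=x_i$ for each $i\in\cG$, bound the resulting sum over the excluded good points using Assumption~\ref{as:BL}, and then average over $i\in\cG$.

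\textbf{Step 1 (general pivot bound).} For a fixed $i\in\cG$, Lemma~\ref{lem:general-pivot} with $\mu=x_i$ gives
\[
\|y_i-\bar x_{\cG}\|\le \frac{2}{G}\sum_{j\in\cG\setminus\mathcal N_{x_i}}\|x_j-x_i\|.
\]

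\textbf{Step 2 (bound each summand).} For every $j\in\cG\setminus\mathcal N_{x_i}$, the triangle inequality through $\bar x_{\cG}$ gives
\[
\|x_j-x_i\|\le\|x_j-\bar x_{\cG}\|+\|x_i-\bar x_{\cG}\|.
\]
Both $i$ and $j$ are good, so Assumption~\ref{as:BL} bounds each of the two terms by
\[
M\eqdef\frac{C}{G}\sum_{t\in\cG}\|x_t-\bar x_{\cG}\|.
\]
Hence every summand is at most $2M$.

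\textbf{Step 3 (count the excluded points).} The number of summands is $|\cG\setminus\mathcal N_{x_i}|=\alpha_i G\le\alpha G$. Combining with Steps~1 and~2,
\[
\|y_i-\bar x_{\cG}\|\le\frac{2}{G}\cdot\alpha G\cdot 2M=4\alpha M.
\]

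\textbf{Step 4 (average).} This bound is uniform in $i\in\cG$, so averaging over $i\in\cG$ leaves it unchanged. This yields exactly $\frac{4\alpha C}{G}\sum_{t\in\cG}\|x_t-\bar x_{\cG}\|$.

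There is no real obstacle here. The one point to get right is the choice in Step~2: the pairwise distance $\|x_j-x_i\|$ must be routed through $\bar x_{\cG}$, because Assumption~\ref{as:BL} controls only deviations from $\bar x_{\cG}$. This routing is what produces the factor $2$ that becomes the constant $4$. A sharper estimate is possible by keeping the term $\|x_i-\bar x_{\cG}\|$ exact and averaging it over $i$, which gives $\frac{2\alpha(C+1)}{G}\sum_{t\in\cG}\|x_t-\bar x_{\cG}\|$. Since $C>1$, this quantity is at most the stated bound, so either route proves the lemma.
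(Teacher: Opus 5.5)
Your proposal is correct and follows the same route as the paper: Lemma~\ref{lem:general-pivot} with $\mu=x_i$, then the triangle inequality through $\bar x_{\cG}$, the count $|\cG\setminus\mathcal N_{x_i}|=\alpha_i G$, Assumption~\ref{as:BL}, and finally averaging over $i\in\cG$. Your Step~2 is slightly more careful than the paper, whose intermediate step bounds by $\max_{j\in\cG\setminus\mathcal N_{x_i}}\|x_j-\bar x_{\cG}\|$ even though that quantity need not dominate $\|x_i-\bar x_{\cG}\|$; bounding both terms directly through Assumption~\ref{as:BL}, whose maximum runs over all of $\cG$, as you do, is the rigorous way to write it.
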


\begin{proof}
By Lemma~\ref{lem:general-pivot} with $\mu=x_i$:

\[
\|y_i-\bar x_{\cG}\| \le \frac{2}{G}\sum_{j\in\cG\setminus\mathcal N_{x_i}}\|x_j-x_i\|.
\]

Applying the triangle inequality $\|x_j-x_i\|\le\|x_j-\bar x_{\cG}\|+\|x_i-\bar x_{\cG}\|$, $|\cG\setminus\mathcal N_{x_i}|=\alpha_i G$ and by Assumption~\ref{as:BL}:

\begin{align*}
\|y_i-\bar x_{\cG}\| &\le \frac{2}{G}\sum_{j\in\cG\setminus\mathcal N_{x_i}}\left(\|x_j-\bar x_{\cG}\|+\|x_i-\bar x_{\cG}\|\right) \\
&\le \frac{4 |\cG\setminus\mathcal N_{x_i}|}{G}  \max_{j \in \cG\setminus\mathcal N_{x_i}} \|x_j-\bar x_{\cG} \| \le \frac{4 \alpha_i C}{G} \sum_{j \in \cG} \|x_j-\bar x_{\cG} \|.
\end{align*}

Averaging over $i \in \cG$ and applying $\alpha_i \le \alpha$ we have the desired lemma.
\end{proof}

\begin{lemma}\label{lem:center-disp}
Under Assumption~\ref{as:BL}, letting $\alpha\eqdef \max_{i\in\cG}\alpha_i\le B/(n-B)$, we have
\begin{align}
\|\bar y_{\cG}-\bar x_{\cG}\|
&\le \frac{4\alpha C}{G}\sum_{t\in\cG}\|x_t-\bar x_{\cG}\|, \label{eq:center-shift}\\
\frac{1}{G}\sum_{i\in\cG}\|y_i-\bar y_{\cG}\|
&\le \frac{8\alpha C}{G}\sum_{t\in\cG}\|x_t-\bar x_{\cG}\|. \label{eq:dispersion}
\end{align}
\end{lemma}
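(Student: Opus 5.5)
For \eqref{eq:center-shift}, I will use convexity of the norm. Since $\bar y_{\cG}=\frac1G\sum_{i\in\cG}y_i$, we have $\bar y_{\cG}-\bar x_{\cG}=\frac1G\sum_{i\in\cG}(y_i-\bar x_{\cG})$. By the triangle inequality,
\[
\|\bar y_{\cG}-\bar x_{\cG}\|\le \frac1G\sum_{i\in\cG}\|y_i-\bar x_{\cG}\|.
\]
The right-hand side is exactly the quantity bounded in Lemma~\ref{lem:avg-good-pivot}, namely $\frac{4\alpha C}{G}\sum_{t\in\cG}\|x_t-\bar x_{\cG}\|$. This gives \eqref{eq:center-shift} immediately.

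For \eqref{eq:dispersion}, I will insert $\bar x_{\cG}$ as an intermediate point. For each $i\in\cG$,
\[
\|y_i-\bar y_{\cG}\|\le \|y_i-\bar x_{\cG}\|+\|\bar x_{\cG}-\bar y_{\cG}\|.
\]
Averaging over $i\in\cG$ gives
\[
\frac1G\sum_{i\in\cG}\|y_i-\bar y_{\cG}\|\le \frac1G\sum_{i\in\cG}\|y_i-\bar x_{\cG}\|+\|\bar y_{\cG}-\bar x_{\cG}\|.
\]
The first term is bounded by Lemma~\ref{lem:avg-good-pivot}, and the second by \eqref{eq:center-shift}. Each contributes $\frac{4\alpha C}{G}\sum_{t\in\cG}\|x_t-\bar x_{\cG}\|$, so together they give the factor $8\alpha C$.

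There is no genuine obstacle here, since all the NNM-specific work (the pairing argument and the use of Assumption~\ref{as:BL}) is already contained in Lemma~\ref{lem:avg-good-pivot}. The only point to check is that $\alpha$ is the same uniform quantity $\max_{i\in\cG}\alpha_i$ in both bounds, so the constants add cleanly.
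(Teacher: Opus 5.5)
Your proof is correct and is the same as the paper's. You use the triangle inequality to reduce \eqref{eq:center-shift} to Lemma~\ref{lem:avg-good-pivot}. For \eqref{eq:dispersion}, you route through $\bar x_{\cG}$, which splits the average into two terms, each bounded by $\frac{4\alpha C}{G}\sum_{t\in\cG}\|x_t-\bar x_{\cG}\|$.
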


\begin{proof}
For \eqref{eq:center-shift}, since $\bar y_{\cG}=\frac{1}{G}\sum_{i\in\cG}y_i$, by triangle inequality:

\[
\|\bar y_{\cG}-\bar x_{\cG}\| = \left\|\frac{1}{G}\sum_{i\in\cG}(y_i-\bar x_{\cG})\right\| \le \frac{1}{G}\sum_{i\in\cG}\|y_i-\bar x_{\cG}\|.
\]

Applying Lemma~\ref{lem:avg-good-pivot} yields \eqref{eq:center-shift}.

For \eqref{eq:dispersion}, by the triangle inequality:

\[
\|y_i-\bar y_{\cG}\| \le \|y_i-\bar x_{\cG}\| + \|\bar y_{\cG}-\bar x_{\cG}\|.
\]

Averaging over $i\in\cG$ and applying Lemma~\ref{lem:avg-good-pivot} and \eqref{eq:center-shift}:

\[
\frac{1}{G}\sum_{i\in\cG}\|y_i-\bar y_{\cG}\| \le \frac{4\alpha C}{G}\sum_{t\in\cG}\|x_t-\bar x_{\cG}\| + \frac{4\alpha C}{G}\sum_{t\in\cG}\|x_t-\bar x_{\cG}\| = \frac{8\alpha C}{G}\sum_{t\in\cG}\|x_t-\bar x_{\cG}\|.
\]

\end{proof}

\begin{lemma}[Robustness after NNM]\label{lem:nnm-linear}
Let $F:\R^{d\times n}\to\R^d$ be $(\delta,\kappa)$-robust in the sense of Definition~\ref{def:robust_agg}.
Under Assumption~\ref{as:BL}, for any $x_1,\ldots,x_n$,

\[
\big\|F\big(\texttt{NNM}(x_1,\ldots,x_n)\big)-\bar x_{\cG}\big\|
\le
\frac{\kappa'}{G}\sum_{t\in\cG}\|x_t-\bar x_{\cG}\|,
\]

where $\kappa'=(8\kappa+4)\alpha C$ and $\alpha\le B/(n-B)$.
\end{lemma}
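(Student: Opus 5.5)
We split the error through the NNM outputs. Write $y_1,\ldots,y_n$ for the NNM outputs, with $y_i = y_{x_i}$, and $\bar y_{\cG} = \frac1G\sum_{i\in\cG}y_i$. By the triangle inequality,
\[
\big\|F(y_1,\ldots,y_n)-\bar x_{\cG}\big\| \le \big\|F(y_1,\ldots,y_n)-\bar y_{\cG}\big\| + \|\bar y_{\cG}-\bar x_{\cG}\|.
\]
The second term is handled directly by \eqref{eq:center-shift} in Lemma~\ref{lem:center-disp}. It contributes at most $\frac{4\alpha C}{G}\sum_{t\in\cG}\|x_t-\bar x_{\cG}\|$.

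For the first term we apply the $(\delta,\kappa)$-robustness of $F$ (Definition~\ref{def:robust_agg}) to the mixed vectors $y_1,\ldots,y_n$. We keep the same index partition: $y_i$ for $i\in\cG$ are the "good" inputs and $y_i$ for $i\in\cB$ are arbitrary. The key observation is that the definition allows the Byzantine inputs to be arbitrary. The mixed Byzantine vectors $y_b$, $b\in\cB$, are just some vectors in $\R^d$, and at most $B\le\delta n$ of them are corrupted. So the definition applies with good average $\bar y_{\cG}$ and gives
\[
\big\|F(y_1,\ldots,y_n)-\bar y_{\cG}\big\| \le \frac{\kappa}{G}\sum_{i\in\cG}\|y_i-\bar y_{\cG}\|.
\]
Then \eqref{eq:dispersion} bounds the right-hand side by $\kappa\cdot\frac{8\alpha C}{G}\sum_{t\in\cG}\|x_t-\bar x_{\cG}\|$.

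Adding the two contributions gives the coefficient $8\kappa\alpha C + 4\alpha C = (8\kappa+4)\alpha C = \kappa'$, as claimed. The bound $\alpha\le B/(n-B)$ was already recorded before Lemma~\ref{lem:avg-good-pivot}.

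The only delicate point is legitimizing the application of Definition~\ref{def:robust_agg} to the NNM outputs. The good outputs $y_i$ depend on the Byzantine inputs, since the nearest-neighbor sets may contain Byzantine points. However, the definition is a deterministic statement quantified over all input vectors with a designated good index set, so this dependence is harmless. We only need that the good indices are the same set $\cG$ of size $G$, and that the comparison point in the definition is the average of exactly those good inputs, here $\bar y_{\cG}$. Everything else is a direct combination of Lemma~\ref{lem:center-disp}'s two inequalities.
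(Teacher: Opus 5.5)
Your proposal is correct and follows the paper's proof exactly: you split by the triangle inequality through $\bar y_{\cG}$, apply the $(\delta,\kappa)$-robustness of $F$ to the mixed vectors with the same good index set $\cG$, and use the two bounds \eqref{eq:dispersion} and \eqref{eq:center-shift} of Lemma~\ref{lem:center-disp} to obtain $8\kappa\alpha C+4\alpha C=\kappa'$. Your justification for applying Definition~\ref{def:robust_agg} to the NNM outputs (a deterministic guarantee over all inputs with a designated good set) is exactly the step the paper uses implicitly.
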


\begin{proof}
Let $(y_1,\ldots,y_n)=\texttt{NNM}(x_1,\ldots,x_n)$. By the triangle inequality:

\[
\|F(y_1,\ldots,y_n)-\bar x_{\cG}\|
\le \|F(y_1,\ldots,y_n)-\bar y_{\cG}\|+\|\bar y_{\cG}-\bar x_{\cG}\|.
\]

By the $(\delta,\kappa)$-robustness of $F$ (Definition~\ref{def:robust_agg}):

\[
\|F(y_1,\ldots,y_n)-\bar y_{\cG}\| \le \frac{\kappa}{G}\sum_{i\in\cG}\|y_i-\bar y_{\cG}\|.
\]

Applying Lemma~\ref{lem:center-disp}:
\begin{align*}
\|F(y_1,\ldots,y_n)-\bar x_{\cG}\|
&\le \frac{\kappa}{G}\sum_{i\in\cG}\|y_i-\bar y_{\cG}\| + \|\bar y_{\cG}-\bar x_{\cG}\| \\
&\le \frac{8\kappa\alpha C}{G}\sum_{t\in\cG}\|x_t-\bar x_{\cG}\| + \frac{4\alpha C}{G}\sum_{t\in\cG}\|x_t-\bar x_{\cG}\| \\
&= \frac{(8\kappa+4)\alpha C}{G}\sum_{t\in\cG}\|x_t-\bar x_{\cG}\|.
\end{align*}
\end{proof}

\begin{remark}
The composition of NNM with a $(\delta,\kappa)$-robust aggregator $F$ yields a robust aggregator with constant $\kappa'=(8\kappa+4)\alpha C$, where $\alpha\le B/(n-B)$ measures the fraction of Byzantine contamination. When $B\ll n$, this additional factor is small, showing that NNM preprocessing preserves robustness up to a multiplicative constant depending on the Byzantine fraction and the leverage constant $C$.
\end{remark}

\subsection{Geometric Median}

The Geometric Median of $v_1, \ldots, v_n \in \R^d$, denoted by $\text{GM}(v_1, \ldots, v_n)$, is defined as a vector that minimizes the sum of the $\ell_2$-distances to these vectors:
\begin{equation*}
    \text{GM}(v_1, \ldots, v_n) \in \argmin_{y \in \R^d} \sum_{i=1}^n \|y - v_i\|.
\end{equation*}

\begin{proposition}[Geometric Median Robustness]\label{prop:gm_new}
Let $n \in \mathbb{N}$ and $\delta < \frac{1}{2}$ such that $B = \lfloor \delta n \rfloor < \frac{n}{2}$. 
The Geometric Median is $(\delta, \kappa)$-robust with 
\begin{equation*}
    \kappa = 2\left(1 + \frac{B}{n-2B}\right).
\end{equation*}
\end{proposition}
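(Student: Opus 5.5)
We follow the classical robustness argument for the geometric median (as in \citet{allouah2023}), working with sums of distances rather than squared distances. Write $y^\star \eqdef \text{GM}(v_1,\ldots,v_n)$, $\bar v_{\cG} = \frac1G\sum_{i\in\cG}v_i$, and $r \eqdef \|y^\star-\bar v_{\cG}\|$. The goal is to show
\[
r \le \frac{\kappa}{G}\sum_{i\in\cG}\|v_i-\bar v_{\cG}\|, \qquad \kappa = 2\Bigl(1+\frac{B}{n-2B}\Bigr).
\]
The only tool is the optimality of $y^\star$ compared against the competitor point $\bar v_{\cG}$.

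\textbf{Step 1 (optimality comparison).} Since $y^\star$ minimizes the sum of distances,
\[
\sum_{i\in\cG}\|y^\star-v_i\| + \sum_{b\in\cB}\|y^\star-v_b\| \le \sum_{i\in\cG}\|\bar v_{\cG}-v_i\| + \sum_{b\in\cB}\|\bar v_{\cG}-v_b\|.
\]
For each Byzantine point the triangle inequality gives $\|\bar v_{\cG}-v_b\| - \|y^\star-v_b\| \le r$. The Byzantine contribution is therefore at most $Br$, and the arbitrary locations of the $v_b$ drop out.

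\textbf{Step 2 (lower bound on the good part).} For each good $i$ we use the reverse triangle inequality $\|y^\star-v_i\| \ge r - \|v_i-\bar v_{\cG}\|$. Writing $S \eqdef \sum_{i\in\cG}\|v_i-\bar v_{\cG}\|$, this gives $\sum_{i\in\cG}\|y^\star-v_i\| \ge Gr - S$. Combining with Step 1 yields
\[
Gr - S \le S + Br, \quad\text{so}\quad (G-B)\,r \le 2S .
\]
Here $G-B = n-2B > 0$ because $B<n/2$.

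\textbf{Step 3 (constants).} The previous step gives
\[
r \le \frac{2S}{n-2B} = \frac{2G}{n-2B}\cdot\frac{S}{G}.
\]
Since $\frac{G}{n-2B} = \frac{n-B}{n-2B} = 1+\frac{B}{n-2B}$, this is exactly $\kappa = 2\bigl(1+\frac{B}{n-2B}\bigr)$.

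The argument is short. The only delicate point is Step 2: we must use the reverse triangle inequality on each good point separately, not a bound on the distance of $y^\star$ from the good points' mean, so that the slack appears as the linear quantity $S$. This linear form is what Definition~\ref{def:robust_agg} requires.

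We also note that if the geometric median is not unique, the argument applies to any minimizer. No assumption on the positions of the Byzantine vectors is used, only $|\cB| = B \le \lfloor\delta n\rfloor$.
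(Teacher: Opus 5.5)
Your proof is correct and is essentially the paper's argument. You compare the median's objective with that of the competitor $\bar v_{\cG}$, apply the reverse triangle inequality to each good point and the triangle inequality to each Byzantine point, and get $(n-2B)\,r \le 2\sum_{i\in\cG}\|v_i-\bar v_{\cG}\|$, hence $\kappa = \frac{2(n-B)}{n-2B} = 2\bigl(1+\frac{B}{n-2B}\bigr)$. The only difference is the order of steps: the paper first lower-bounds the full objective at the median, so the Byzantine distances cancel once optimality is invoked, whereas you invoke optimality first and bound each Byzantine discrepancy by $r$ directly.
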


\begin{proof}
The proof follows the same approach as in \cite{allouah2023} but stops before squaring both sides to match our linear robustness definition.

Let $v^* := \text{GM}(v_1, \ldots, v_n)$ be the geometric median of the input vectors. Consider any subset $S \subseteq [n]$ of size $|S| = \lfloor(1-\delta)n\rfloor = n - B$. By the reverse triangle inequality, for any $i \in S$, we have
\begin{equation*}
    \|v^* - v_i\| \geq |\|v^* - \bar{v}_S\| - \|v_i - \bar{v}_S\||.
\end{equation*}

Similarly, for any $i \in [n] \setminus S$, we obtain
\begin{equation*}
    \|v^* - v_i\| \geq |\|v_i - \bar{v}_S\| - \|v^* - \bar{v}_S\||.
\end{equation*}

Summing over all input vectors and using the fact that $|[n] \setminus S| \leq B$:
\begin{align*}
    \sum_{i \in [n]} \|v^* - v_i\| &\geq \sum_{i \in S} (|\|v^* - \bar{v}_S\| - \|v_i - \bar{v}_S\||) + \sum_{i \in [n]\setminus S} (|\|v_i - \bar{v}_S\| - \|v^* - \bar{v}_S\||) \\
    &\geq (n-2B) \|v^* - \bar{v}_S\| - \sum_{i \in S} \|v_i - \bar{v}_S\| + \sum_{i \in [n]\setminus S} \|v_i - \bar{v}_S\|.
\end{align*}

Rearranging terms:
\begin{equation*}
    \|v^* - \bar{v}_S\| \leq \frac{1}{n-2B} \left( \sum_{i \in [n]} \|v^* - v_i\| + \sum_{i \in S} \|v_i - \bar{v}_S\| - \sum_{i \in [n]\setminus S} \|v_i - \bar{v}_S\| \right).
\end{equation*}

By the definition of the geometric median:
\begin{equation*}
    \sum_{i \in [n]} \|v^* - v_i\| \leq \sum_{i \in [n]} \|\bar{v}_S - v_i\|.
\end{equation*}

Therefore:
\begin{align*}
    \|v^* - \bar{v}_S\| &\leq \frac{1}{n-2B} \left( \sum_{i \in [n]} \|v_i - \bar{v}_S\| + \sum_{i \in S} \|v_i - \bar{v}_S\| - \sum_{i \in [n]\setminus S} \|v_i - \bar{v}_S\| \right) \\
    &= \frac{1}{n-2B} \left( 2\sum_{i \in S} \|v_i - \bar{v}_S\| \right) = \frac{2}{n-2B} \sum_{i \in S} \|v_i - \bar{v}_S\| \\
    &= \frac{2(n-B)}{(n-2B)|S|} \sum_{i \in S} \|v_i - \bar{v}_S\| = 2\left(1 + \frac{B}{n-2B}\right) \frac{1}{|S|} \sum_{i \in S} \|v_i - \bar{v}_S\|.
\end{align*}

This establishes the desired robustness property with $\kappa = 2\left(1 + \frac{B}{n-2B}\right)$.
\end{proof}

\subsection{Coordinate-wise Median}

For input vectors $v_1, \ldots, v_n \in \R^d$, their coordinate-wise median, denoted by $\text{CWMed}(v_1, \ldots, v_n)$, is defined as a vector whose $k$-th coordinate is:
\begin{equation*}
    \left[\text{CWMed}(v_1, \ldots, v_n)\right]_k := \text{Median}\left( [v_1]_k, \ldots, [v_n]_k \right).
\end{equation*}

\begin{lemma}[Coordinate-wise Robustness]\label{lem:cw-kappa-new}
Assume that $F: \R^{d \times n} \to \R^d$ is a coordinate-wise aggregation function, i.e., there exist $d$ real-valued functions $F_1,\ldots,F_d : \R^n \to \R$ such that for all $v_1,\ldots,v_n \in \R^d$, 
$\left[F(v_1,\ldots,v_n) \right]_k = F_k([v_1]_k,\ldots,[v_n]_k)$.
If for each $k \in [d]$, $F_k$ is $(\delta,\kappa)$-robust (in the sense of Definition~\ref{def:robust_agg} applied to scalars), then $F$ is $(\delta,\kappa\sqrt{d})$-robust.
\end{lemma}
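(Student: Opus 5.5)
We fix the input vectors and apply the scalar robustness bound separately in each coordinate. Then we pass from the $\ell_1$-type coordinate sums to Euclidean norms, which is where the factor $\sqrt d$ enters.

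Fix $v_1,\ldots,v_n\in\R^d$ with at most $B\le\delta n$ Byzantine entries, and write $\bar v_{\cG}=\frac1G\sum_{i\in\cG}v_i$. Averaging commutes with taking coordinates, so $[\bar v_{\cG}]_k=\frac1G\sum_{i\in\cG}[v_i]_k$ is exactly the good-worker average of the $k$-th scalar inputs. The Byzantine set is the same in every coordinate. Scalar $(\delta,\kappa)$-robustness of $F_k$ therefore gives, for each $k\in[d]$,
\[
\big|[F(v_1,\ldots,v_n)]_k-[\bar v_{\cG}]_k\big| \le \frac{\kappa}{G}\sum_{i\in\cG}\big|[v_i]_k-[\bar v_{\cG}]_k\big|.
\]

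Next we combine the coordinates. We use $\|u\|_2\le\|u\|_1$ on the left-hand side:
\[
\|F(v)-\bar v_{\cG}\|\le \sum_{k=1}^d \big|[F(v)]_k-[\bar v_{\cG}]_k\big| \le \frac{\kappa}{G}\sum_{i\in\cG}\|v_i-\bar v_{\cG}\|_1 ,
\]
where the inner sums were exchanged. Finally, Cauchy--Schwarz gives $\|u\|_1\le\sqrt d\,\|u\|_2$, so each $\|v_i-\bar v_{\cG}\|_1\le \sqrt d\,\|v_i-\bar v_{\cG}\|$. This yields
\[
\|F(v)-\bar v_{\cG}\|\le \frac{\kappa\sqrt d}{G}\sum_{i\in\cG}\|v_i-\bar v_{\cG}\|,
\]
which is $(\delta,\kappa\sqrt d)$-robustness.

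The only delicate point is avoiding a factor $d$. Bounding the left side by $\sqrt d\max_k|\cdot|$ and the right side coordinatewise would lose a factor, so the $\ell_1$ route must be used on the left. With it the argument loses only the single $\sqrt d$ from comparing $\ell_1$ and $\ell_2$ on the right-hand side, so I do not expect a genuine obstacle.
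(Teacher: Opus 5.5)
Your proof is correct and is essentially identical to the paper's. Both apply the scalar bound in each coordinate with the same good set, sum over $k$, and use $\|a\|_2\le\|a\|_1$ on the left and $\|a\|_1\le\sqrt{d}\|a\|_2$ on the right. (Your closing remark that the $\ell_1$ route is forced is not quite right: $\|u\|_2\le\sqrt{d}\|u\|_\infty$ together with $\max_k\sum_{i\in\cG}|[w_i]_k|\le\sum_{i\in\cG}\|w_i\|_\infty\le\sum_{i\in\cG}\|w_i\|_2$ also gives exactly $\kappa\sqrt{d}$, but this does not affect your argument.)
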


\begin{proof}
Since each $F_k$ is $(\delta,\kappa)$-robust, for any subset $\mathcal{G}$ of good workers with $|\mathcal{G}| = G \ge (1-\delta)n$:
\[
|[F(v_1,\ldots,v_n)]_k - [\bar{v}_{\mathcal{G}}]_k| \le \frac{\kappa}{G}\sum_{i\in\mathcal{G}}|[v_i]_k - [\bar{v}_{\mathcal{G}}]_k|.
\]
Summing over all coordinates:
\[
\sum_{k=1}^d |[F(v)]_k - [\bar{v}_{\mathcal{G}}]_k| \le \frac{\kappa}{G}\sum_{i\in\mathcal{G}}\sum_{k=1}^d|[v_i]_k - [\bar{v}_{\mathcal{G}}]_k| = \frac{\kappa}{G}\sum_{i\in\mathcal{G}}\|v_i - \bar{v}_{\mathcal{G}}\|_1.
\]
Using $\|a\|_2 \le \|a\|_1$ on the left and $\|a\|_1 \le \sqrt{d}\|a\|_2$ on the right:
\[
\|F(v) - \bar{v}_{\mathcal{G}}\| \le \frac{\kappa\sqrt{d}}{G}\sum_{i\in\mathcal{G}}\|v_i - \bar{v}_{\mathcal{G}}\|.
\]
\end{proof}
\begin{proposition}[Coordinate-wise Median Robustness]\label{prop:cwmed_new}
Let $n \in \mathbb{N}$ and $\delta < \frac{1}{2}$ such that $B = \lfloor \delta n \rfloor < \frac{n}{2}$. 
The Coordinate-wise Median is $(\delta, \kappa)$-robust with 
\begin{equation*}
    \kappa = 2\sqrt{d}\left(1 + \frac{B}{n-2B}\right),
\end{equation*}
where $d$ is the dimension of the input vectors.
\end{proposition}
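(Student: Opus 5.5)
The natural route is Lemma~\ref{lem:cw-kappa-new}. That lemma reduces the claim to a scalar statement: it suffices to show that the one-dimensional median $\text{Median}:\R^n\to\R$ is $(\delta,\kappa_1)$-robust in the sense of Definition~\ref{def:robust_agg} with $\kappa_1 = 2\left(1+\frac{B}{n-2B}\right)$. The factor $\sqrt{d}$ is then supplied by the lemma. So the whole proof is the scalar case, coordinate by coordinate.

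For the scalar case, I will observe that in one dimension the geometric median is exactly a minimizer of $y\mapsto\sum_{i}|y-v_i|$. The set of minimizers is the (closed) median interval. The standard median (either the middle order statistic, or the midpoint of the two middle ones when $n$ is even) lies in this argmin set. The proof of Proposition~\ref{prop:gm_new} never used any property of $v^*$ beyond the optimality inequality
\[
\textstyle\sum_i\|v^*-v_i\|\le\sum_i\|\bar v_S-v_i\|.
\]
Hence that proof applies verbatim to any point of the argmin, and in particular to $\text{Median}([v_1]_k,\ldots,[v_n]_k)$ with $d=1$. Taking $S=\cG$, this gives
\[
|\text{Median}(\cdot)-\bar v_{\cG}|\le 2\left(1+\tfrac{B}{n-2B}\right)\tfrac{1}{G}\sum_{i\in\cG}|v_i-\bar v_{\cG}|.
\]
In effect, I simply repeat the reverse-triangle-inequality chain with absolute values in place of norms.

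The final step combines the two pieces. Lemma~\ref{lem:cw-kappa-new}, applied with each $F_k$ equal to the scalar median, yields $(\delta, 2\sqrt d(1+\frac{B}{n-2B}))$-robustness.

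The only point needing care is the tie-breaking convention for the median when $n$ is even. I must make sure the chosen value is a minimizer of the $\ell_1$ objective, which holds for any point between the two middle order statistics. Alternatively, one can give a direct scalar argument: at least $n-2B>0$ good points lie on each side of the median, which bounds the median's deviation from $\bar v_{\cG}$. The minimizer argument is cleaner, so I will use it. I also need $B<n/2$ so that $n-2B>0$, and this is assumed.
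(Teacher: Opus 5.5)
Your proposal is correct and follows the paper's proof: you identify the scalar median as a minimizer of $y\mapsto\sum_i|y-v_i|$, invoke Proposition~\ref{prop:gm_new} to get scalar robustness with constant $2\left(1+\frac{B}{n-2B}\right)$, and let Lemma~\ref{lem:cw-kappa-new} supply the $\sqrt{d}$ factor. Your observation that the geometric-median argument uses only the optimality inequality, so any point of the median interval works when $n$ is even, is care the paper omits; one slip, in the alternative you do not use, is that each side of the median is guaranteed only $\lceil n/2\rceil - B$ good points, not $n-2B$, which does not affect your proof.
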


\begin{proof}
Since geometric median coincides with median for one-dimensional inputs, it follows from Proposition~\ref{prop:gm_new} that for one-dimensional inputs, median is $(\delta,\kappa_0)$-robust with $\kappa_0 = 2\left(1 + \frac{B}{n-2B}\right)$. Since coordinate-wise median is a coordinate-wise aggregation function, we apply Lemma~\ref{lem:cw-kappa-new} to obtain that the $d$-dimensional coordinate-wise median is $(\delta, \kappa_0\sqrt{d})$-robust, giving the stated bound.
\end{proof}

\begin{remark}
The $\sqrt{d}$ factor arises from converting between $\ell_1$ and $\ell_2$ norms when combining per-coordinate bounds into a vector bound. This dimension dependence is a known limitation of coordinate-wise aggregation rules under linear robustness definitions. In contrast, the geometric median (Proposition~\ref{prop:gm_new}) achieves dimension-independent robustness because it operates directly on the full vectors.
\end{remark}

\newpage
\section{Training Curves}\label{appendix:curves}

\begin{figure}[H]
    \centering
    \includegraphics[width=\textwidth]{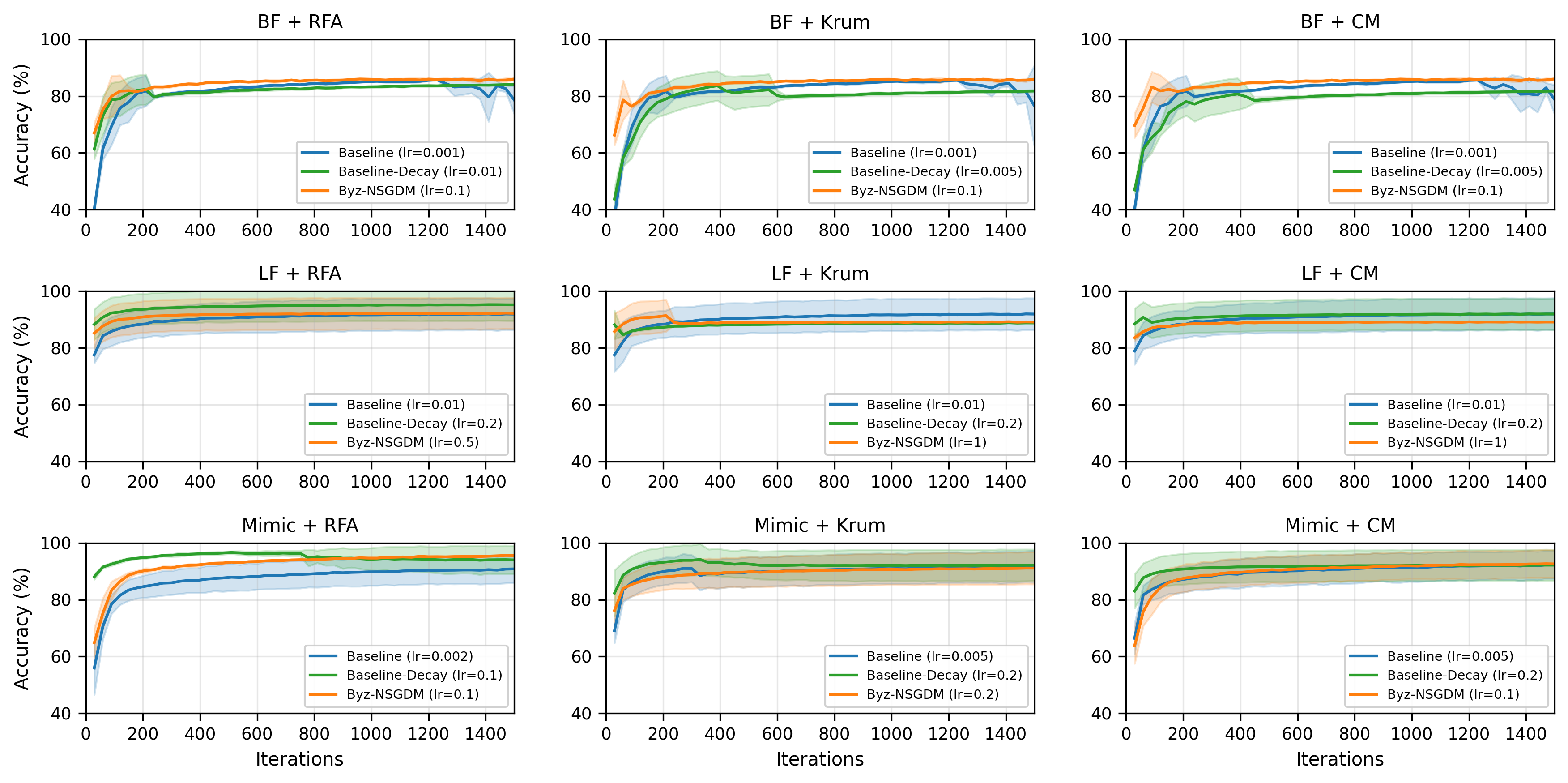}
    \caption{Test accuracy curves on heterogeneous MNIST under Byzantine attacks. Each row corresponds to a different attack (BF, LF, Mimic), and each column to a different aggregator (RFA, Krum, CM). Lines show mean accuracy over 3 seeds with shaded regions indicating $\pm 1$ standard deviation.}
    \label{fig:mnist_curves}
\end{figure}

\begin{figure}[H]
    \centering
    \includegraphics[width=\textwidth]{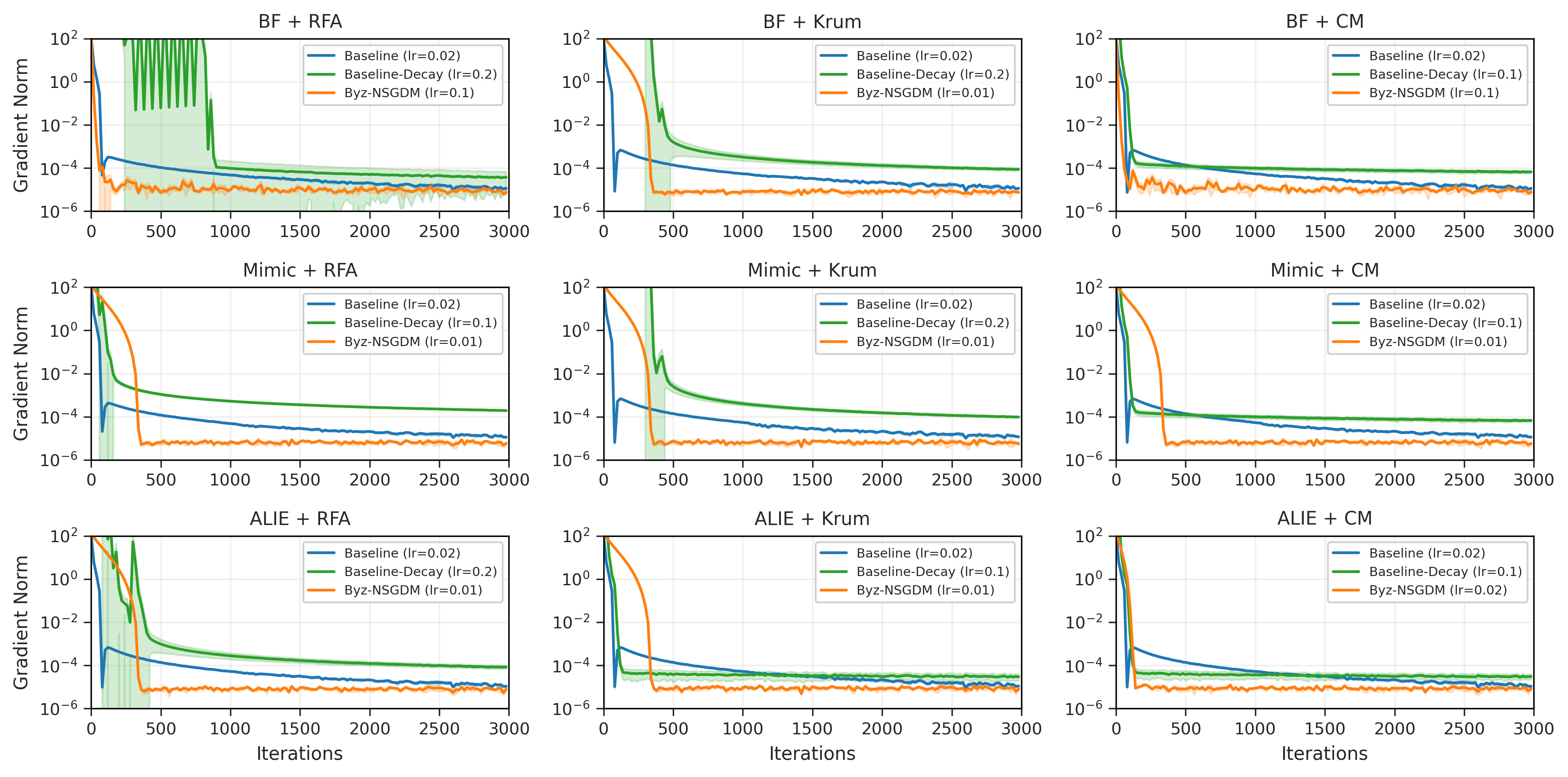}
    \caption{Gradient norm evolution (log scale) for synthetic $(L_0,L_1)$-smooth optimization under Byzantine attacks. Each row shows a different attack (BF, Mimic, ALIE), and each column a different aggregator (RFA, Krum, CM). Lines show mean gradient norm over 3 seeds with shaded regions indicating $\pm 1$ standard deviation. Legend includes tuned learning rates.}
    \label{fig:synthetic_curves}
\end{figure}

\begin{figure}[H]
    \centering
    \includegraphics[width=\textwidth]{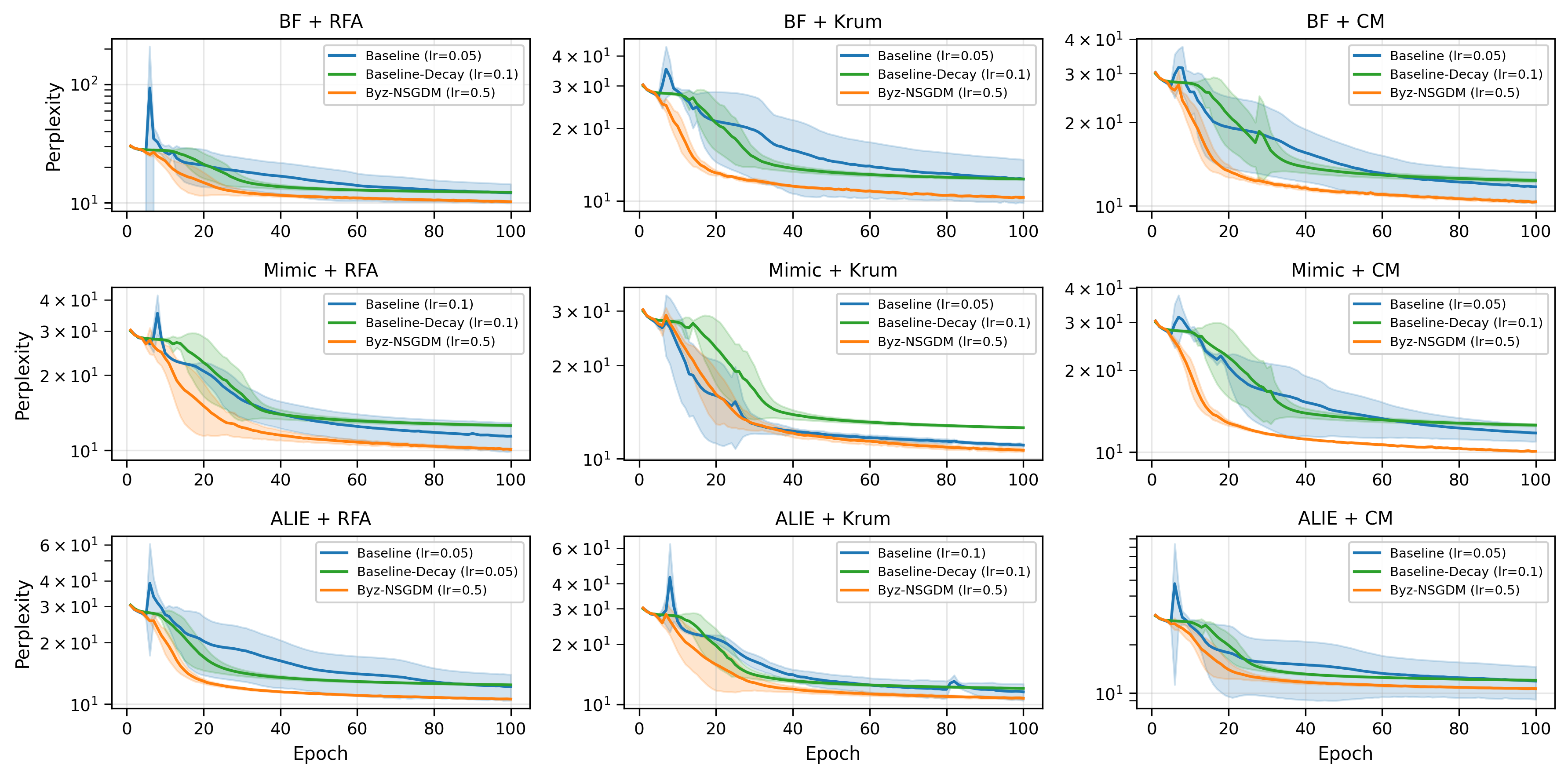}
    \caption{Validation perplexity (log scale) on Shakespeare character-level language modeling under Byzantine attacks. Each row shows a different attack (BF, Mimic, ALIE), and each column a different aggregator (RFA, Krum, CM).}
    \label{fig:lm_curves}
\end{figure}

% Bibliography is handled by the main CPAL document.

\end{document}